\documentclass{article}




    \usepackage[final,nonatbib]{neurips_2024}


\usepackage{lmodern}


\usepackage{amsmath,amsfonts,amsthm,bm}









\def\eqref#1{equation~\ref{#1}}









\def\1{\bm{1}}








\def\ve{{\bm{e}}}

\def\vw{{\bm{w}}}
\def\vx{{\bm{x}}}
\def\vy{{\bm{y}}}


\def\mA{{\bm{A}}}
\def\mB{{\bm{B}}}

\def\mD{{\bm{D}}}

\def\mH{{\bm{H}}}
\def\mI{{\bm{I}}}
\def\mJ{{\bm{J}}}

\def\mL{{\bm{L}}}
\def\mM{{\bm{M}}}

\def\mP{{\bm{P}}}

\def\mR{{\bm{R}}}

\def\mT{{\bm{T}}}
\def\mU{{\bm{U}}}
\def\mV{{\bm{V}}}
\def\mW{{\bm{W}}}
\def\mX{{\bm{X}}}
\def\mY{{\bm{Y}}}
\def\mZ{{\bm{Z}}}

\DeclareMathAlphabet{\mathsfit}{\encodingdefault}{\sfdefault}{m}{sl}
\SetMathAlphabet{\mathsfit}{bold}{\encodingdefault}{\sfdefault}{bx}{n}
\newcommand{\tens}[1]{\bm{\mathsfit{#1}}}

\def\tL{{\tens{L}}}

\def\tX{{\tens{X}}}













\DeclareMathOperator*{\argmin}{arg\,min}

\DeclareMathOperator{\Tr}{Tr}

\usepackage{subcaption}

\usepackage[hidelinks,backref=page]{hyperref}

\usepackage[nameinlink, noabbrev, capitalize]{cleveref}
\usepackage{url}

\usepackage{url}
\usepackage{bbold}
\usepackage{amssymb}
\usepackage{mathtools}
\usepackage[normalem]{ulem}
\usepackage{thm-restate}
\usepackage{newtxtext,newtxmath}

\usepackage{tabularx}
\usepackage{graphicx}
\usepackage{xcolor}
\usepackage{tablefootnote}
\usepackage{wrapfig}

\usepackage{caption,subcaption}
\usepackage{titletoc}
\usepackage{booktabs,multicol,multirow}

\usepackage{algorithm}
\usepackage[noend]{algpseudocode}

\newtheorem{theorem}{Theorem}

\newtheorem{corollary}[theorem]{Corollary}
\newtheorem{fact}{Fact}
\theoremstyle{definition}
\newtheorem{definition}[theorem]{Definition}

\newtheorem{remark}[theorem]{Remark}
\newtheorem{example}{Example}

\def\conv{{\operatorname{conv}}}
\def\Uconv{{\operatorname{Uconv}}}

\Crefname{equation}{Eq.}{Eq.}
\Crefname{figure}{Fig.}{Fig.}
\Crefname{section}{Sec.}{Sec.}
\Crefname{appendix}{App.}{App.}

\definecolor{darkblue}{rgb}{0.0,0.0,0.65}
\definecolor{darkred}{rgb}{0.68,0.05,0.0}
\definecolor{darkgreen}{rgb}{0.0,0.29,0.29}
\definecolor{darkpurple}{rgb}{0.47,0.09,0.29}
\hypersetup{
   colorlinks = true,
   citecolor  = darkblue,
   linkcolor  = darkred,
   filecolor  = darkblue,
   urlcolor   = darkblue,
 }

\makeatletter
\algnewcommand{\LineComment}[1]{\Statex \hskip\ALG@thistlm  #1}
\algrenewcommand\algorithmicrequire{\textbf{Input:}}

\title{Unitary convolutions for learning on graphs and groups} 

\author{
{\sf Bobak T. Kiani}\thanks{John A. Paulson School of Engineering and Applied Sciences, Harvard; e-mail: {\href{mailto:bkiani@g.harvard.edu}{\texttt{bkiani@g.harvard.edu}}}}
\and
{\sf Lukas Fesser}\thanks{John A. Paulson School of Engineering and Applied Sciences, Harvard; e-mail: {\href{mailto:lukas_fesser@fas.harvard.edu}{\texttt{lukas\_fesser@fas.harvard.edu}}}
}\and
{\sf Melanie Weber}\thanks{John A. Paulson School of Engineering and Applied Sciences, Harvard; e-mail: {\href{mailto:mweber@g.harvard.edu}{\texttt{mweber@g.harvard.edu}}}
}}

\begin{document}

\date{}
\maketitle

Data with geometric structure is ubiquitous in machine learning often arising from fundamental symmetries in a domain, such as permutation-invariance in graphs and translation-invariance in images. Group-convolutional architectures, which encode symmetries as inductive bias, have shown great success in applications, but can suffer from instabilities as their depth increases and often struggle to learn long range dependencies in data. For instance, graph neural networks experience instability due to the convergence of node representations (over-smoothing), which can occur after only a few iterations of message-passing, reducing their effectiveness in downstream tasks. Here, we propose and study \emph{unitary group convolutions}, which allow for deeper networks that are more stable during training. The main focus of the paper are graph neural networks, where we show that unitary graph convolutions provably avoid over-smoothing. Our experimental results confirm that unitary graph convolutional networks achieve competitive performance on benchmark datasets compared to state-of-the-art graph neural networks. We complement our analysis of the graph domain with the study of general unitary convolutions and analyze their role in enhancing stability in general group convolutional architectures.\footnote{Code available at \url{https://github.com/Weber-GeoML/Unitary_Convolutions}}

\section{Introduction}

In recent years, the design of specialized machine learning architectures for structured data has received a surge of interest. Of particular interest are architectures for data domains with inherent symmetries, such as permutation-invariance in graphs and sets, translation-invariance in images, and other symmetries that arise from fundamental laws of physics in scientific data.

Group-convolutional architectures allow for explicitly encoding symmetries as inductive biases, which has led to performance gains in scientific applications~\cite{esteves2023scaling,wang2020incorporating}; theoretical studies have analyzed the impact of geometric inductive biases on the complexity of the architecture~\cite{mei,bietti,kiani2024hardness}.
Graph Neural Networks are among the most popular architectures for graph machine learning and have found impactful applications in a wide range of disciplines, including in chemistry~\cite{gligorijevic2021structure}, drug discovery~\cite{zitnik}, particle physics~\cite{shlomi2020graph}, and recommender systems~\cite{wu2022graph}.
However, despite these successes, several limitations remain. A notable difficulty is the design of stable, deep architectures. Many of the aforementioned applications require accurate learning of long-range dependencies in the data, which necessitates deeper networks. However, it has been widely observed that group-convolutional networks suffer from instabilities as their depths increases. On graph domains, these instabilities have been studied extensively in recent years, notably in the form of \emph{over-smoothing} effects~\cite{rusch2023survey}, which characterizes the fast convergence of the representations of nearby nodes with depth. This effect can often be observed after only a few iterations of message-passing (i.e., small number of layers) and can significantly decrease the utility of the learned representations in downstream tasks. 
While interventions for mitigating over-smoothing have been proposed, including targeted perturbations of the input graph’s connectivity (rewiring) and skip connections, a more principled architectural approach with theoretical guarantees is still lacking.
Similar effects, such as exploding or vanishing gradients, have also been studied in more general group-convolutional architectures, specifically in CNNs~\cite{trockman2021orthogonalizing,singla2021skew,li2019orthogonal}, for which architectural interventions (e.g., skip connections) have been proposed. 

In this work, we take a different route. Inspired by a long line of work studying unitary recurrent neural networks \cite{arjovsky2016unitary,henaff2016recurrent,lezcano2019cheap,kiani2022projunn}, we propose to replace the standard group convolution operator with a \emph{unitary} group convolution. By construction, the unitarity ensures that the linear transformations are norm-preserving and invertible, which can significantly enhance the stability of the network and avoid convergence of representations to a fixed point as its depth increases. We introduce two unitary graph convolution operators, which vary in the way the message passing and feature transformation are parameterized. We then generalize this approach to cover more general group-convolutional architectures. 

Our theoretical analysis of the proposed unitary graph convolutions shows that they enhance stability and prevent over-smoothing effects that decrease the performance of their vanilla counterparts. We further describe how generalized unitary convolutions avoid vanishing and exploding gradients, enhancing the stability of group-convolutional architectures without additional interventions, such as residual connections or batch normalization. 

\subsection{Related work}
We now provide a brief background into work that motivated and inspired this current study, deferring a more complete discussion to \Cref{app:extended_related_work}. 
Unitary matrices have a long history of application in neural networks, specifically related to improving stability for deep networks \cite{saxe2013exact,pascanu2013difficulty} enhancing the learning of long-range dependencies in data  \cite{bengio1994learning,hochreiter1991untersuchungen,saxe2013exact}. \cite{arjovsky2016unitary,jing2017tunable,henaff2016recurrent} implemented unitary matrices in recurrent neural networks to address issues with the challenge of vanishing and exploding gradients inherent to learning long sequences of data in RNNs \cite{bengio1994learning,le2015simple}. These original algorithms were later improved to be more expressive while still being efficient to implement in practice \cite{helfrich2018orthogonal,lezcano2019cheap,kiani2022projunn}. For graph convolution, \cite{hoogeboom2020convolution} discuss applications of the exponential map to linear convolutions; here, we use this same exponential map to explicitly apply unitary operators parameterized in the Lie algebra. For image data, various works \cite{sedghi2018singular,li2019preventing,trockman2021orthogonalizing,singla2021skew,kiani2022projunn} design and analyze variants of orthogonal or unitary convolution used in CNN layers. These can be viewed as a particular instance of the group convolution we study here over the cyclic group. More recently, proposals for unitary or orthogonal message passing have shown improvements in stability and performance compared to conventional message passing approaches \cite{guo2022orthogonal,abbahaddou2024bounding,qiu2024graph}. However, in contrast to our work, these methods do not always implement a unitary transformation across the whole input (e.g. only applying it in the feature transformation) and in the case of \cite{qiu2024graph} can be computationally expensive to implement for large graphs (see \Cref{app:extended_related_work} for more detail).

\section{Background and Notation}

We denote scalars, vectors, and matrices as $c, \vw,$ and $\mM$ respectively. Given a matrix $\mM$, its conjugate transpose and transpose are denoted $\mM^\dagger$ and $\mM^\top$. Given two matrices, $\mA$ and $\mB$, we denote their tensor product (or Kronecker product) as $\mA \otimes \mB$. Given a vector $\vw$, its standard Euclidean norm is denoted $\|\vw\|$. For a matrix $\mM$, we denote its operator norm as $\|\mM\|$ and Frobenius norm as $\|\mM\|_F$.

\paragraph{Group Theory Basics}
Symmetries (``invariances'') describe transformations, which leave properties of data unchanged (``invariant''), and as such characterize the inherent geometric structure of the data domain. Algebraically, symmetries can be characterized as groups. We say that a group is a matrix \emph{Lie group}, if it is a differentiable manifold and a subgroup of the set of invertible $n \times n$ matrices (see \Cref{app:lie_groups_and_exponential_map}). 
Lie groups are associated with a Lie algebra, a vector space, which is formed by its tangent space at the identity. A comprehensive introduction into Lie groups and Lie algebras can be found in~\cite{fulton2013representation,hall2015lie}. Throughout this work we will encounter the $n$-dimensional orthogonal $O(n)$ and unitary $U(n)$ Lie groups, which are defined as
\begin{align}
    O(n) = \left\{ \mU \in \mathbb{R}^{n \times n} : \mU \mU^\intercal = \mI  \right\}, \; \; \; \; \; \;
    U(n) = \left\{ \mU \in \mathbb{C}^{n \times n} : \mU \mU^\dagger = \mI  \right\}.
\end{align}
Lie algebras of $O(n)$ and $U(n)$ are the set of skew symmetric $\mathfrak{o}(n)$ and skew Hermitian $\mathfrak{u}(n)$ matrices respectively, i.e.,
\begin{align}
    \mathfrak{o}(n) = \left\{\mM \in \mathbb{R}^{n \times n}: \mM + \mM^\intercal = 0 \right\}, \; \; \; \; \; \;
    \mathfrak{u}(n) = \left\{\mM \in \mathbb{C}^{n \times n}: \mM + \mM^\dagger = 0 \right\}.
\end{align}
Given a matrix $\mM \in \mathfrak{o}(n)$ (or $\mathfrak{u}(n)$), the matrix exponential maps the matrix to an element of the Lie group $\exp(\mM) \in O(n)$ (or $U(n)$). More details on unitary parametrizations can be found in~\Cref{app:exp_map_approx}.

\paragraph{Graph Neural Networks}
We denote graphs by $\mathcal{G}=(V,E)$ where $V$ and $E$ denote the set of nodes and edges respectively. For a graph on $n$ nodes, unless otherwise specified, we let $V = \{1, \dots, n\}$ index the nodes and denote the adjacency matrix by $\mA \in \mathbb{R}^{n \times n}$ and node features $\vx \in \mathbb{R}^{n \times d}$. We also often use $\mD\in \mathbb{R}^{n \times n}$ to denote the diagonal degree matrix where diagonal entry $i$ records the degree of node $i$. The normalized adjacency matrix is defined as $\widetilde{\mA} = \mD^{-1/2} \mA \mD^{-1/2}$.
Given a node feature matrix $\mX \in \mathbb{R}^{n \times d_{\rm in}}$ where row $i$ denotes the $d_{\rm in}$-dimensional feature vector of node $i$, graph convolution operators take the general form \cite{kipf2016semi,zhu2020simple}
\begin{equation} \label{eq:graph_conv_general}
    f_{\conv}(\mX; \mA) =  \mX \mW_0 +  \mA \mX \mW_1 + \cdots + \mA^k \mX \mW_k ,
\end{equation}
where $\mW_0, \dots \mW_k \in \mathbb{R}^{d_{\rm in} \times d_{\rm out}}$ are trainable parameters. For ease of presentation, we will often omit the adjacency matrix as an explicit input to the operation. Often, only a single ``message passing" step is included and the operation takes the simple form $f_{\conv}(\mX) =  \mA \mX \mW$. \Cref{eq:graph_conv_general} is equivariant under any permutation matrix $\mP_\pi \in \mathbb{R}^{n \times n}$\footnote{In matrix form, entry $[\mP_\pi]_{ij}$ is one if $\pi(j)=i$ and zero otherwise.} since 
\begin{equation}
    f_{\conv}( \mP_\pi \mX; \mP_\pi \mA \mP_\pi^{-1}) = \mP_\pi \cdot f_{\conv}(\mX; \mA).
\end{equation}
We aim to parameterize a particular subset of these operations which preserve unitarity properties.

\paragraph{Group-Convolutional Neural Networks}
In general, a linear convolution operation $\conv_G:\mathbb{C}^{n \times c} \to \mathbb{C}^{n \times c}$ takes the form of a weighted sum over linear transformations that are equivariant to a given group $G$. For simplicity, we assume here that we are working with finite groups though this can be generalized to other settings \cite{kondor2018generalization,cohen2018spherical,cohen2016group}. Given an input $\mX \in \mathbb{C}^{n \times c}$ consisting of $c$ channels in a vector space of dimension $n$, we study convolutions of the form
\begin{equation} 
    \conv_G(\mX) = \sum_{i=1}^{m} \mT_i \mX \mW_i,
\end{equation}
where $\mT_1, \dots, \mT_m \in \mathbb{C}^{n \times n}$ are linear operators equivariant to the group $G$ and $\mW_1, \dots, \mW_m \in \mathbb{C}^{c \times c}$ are parameterized weight matrices. The graph setting is recovered by setting $\mT_k = \mA^k$. Similarly, for cyclic convolution as in conventional CNNs, one sets $\mT_k$ to be the circulant matrices sending basis vector $\mT_k \ve_i = \ve_{i+k} $ where indexing is taken mod $n$.

\section{Unitary Group Convolutions}

\begin{figure}
    \centering
    \includegraphics[width=0.9\textwidth]{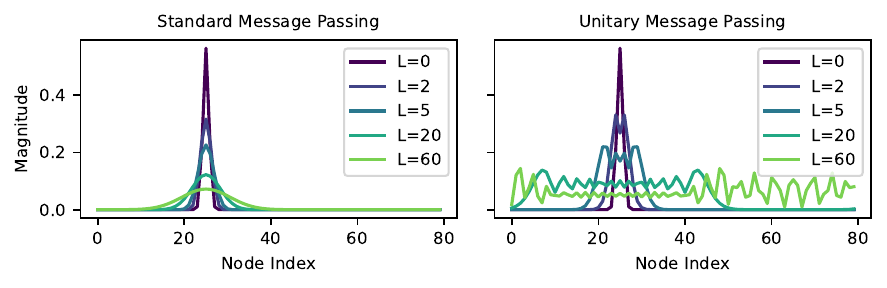}
    \caption{Comparison of standard linear message passing with iterates $\vx_{L+1}=c(\vx_L + \mA \vx_L)$ versus unitary message passing with iterates $\vx_{L+1}=\exp(i\mA)\vx_L$ for a graph of $80$ nodes connected as a ring. The unitary message passing has a wave-like nature which ensures messages ``propagate" through the graph. In contrast, the standard message passing has a unique fixed point corresponding to the all ones vector which inherently causes oversmoothing in the features. Here, $c$ is chosen to ensure the operator norm of the matrix $\mI+\mA$ is bounded by one.}
    \label{fig:propagation_example}
\end{figure}

We first describe unitary convolution for data on graphs (equivariant to permutations) which is the main focus of our study and then detail general procedures for performing unitary convolutions equivariant to general finite groups. Implementing these operations often requires special considerations to handle nonlinearities, complex numbers, initialization, etc. which we discuss in \Cref{app:architectural_considerations}.


\subsection{Unitary graph convolution}
\label{sec:unitary_graph_convolution}
We introduce two variants of unitary 
graph convolution, which we denote as \emph{Separable unitary convolution} (short \emph{UniConv}) and \emph{Lie orthogonal/unitary convolution} (short \emph{Lie UniConv}). UniConv is a simple adjustment to standard message passing and treats linear transformations over nodes and features separately. Lie UniConv, in contrast, parameterizes operations in the Lie algebra of the orthogonal/unitary groups. This operation is fully unitary, but does not have the tensor product nature of the separable UniConv.

By introducing complex numbers, we can enforce unitarity separately in the message passing and feature transformation.
\begin{definition}[Separable unitary graph convolution (UniConv)]\label{def:unitary_graph_conv}
Given an undirected graph $\mathcal{G}$ over $n$ nodes with adjacency matrix $\mA \in \mathbb{R}^{n \times n}$,  separable unitary graph convolution (UniConv) $f_{\Uconv}:\mathbb{C}^{n \times d} \to \mathbb{C}^{n \times d}$ takes the form
\begin{equation} \label{eq:unitary_graph_convolution}
    f_{\Uconv}(\mX) = \exp(i\mA t) \mX  \mU, \quad \mU \mU^\dagger = \mI,
\end{equation}
where $\mU \in U(d)$ is a unitary operator and $t \in \mathbb{R}$ controls the magnitude of the convolution.
\end{definition}

One feature of the complexification of the adjacency matrix is that messages propogate as ``waves" as observed for example in \Cref{fig:propagation_example}. Since $\mA$ is a symmetric matrix, $\exp(i\mA t)$ is unitary for all values of $t \in \mathbb{R}$ and corresponds to vanilla message passing up to first order: $\exp(i\mA t) \approx \mI + i\mA t + O(t^2)$.

\begin{remark}
    We observe that performance on real-world tasks is usually improved when enforcing unitarity in the node message passing where oversmoothing occurs, but not necessarily when enforcing unitarity in the feature transformation $\mU$. Thus, one can choose to leave $\mU$ in \Cref{eq:unitary_graph_convolution} as a fully parameterized (unconstrained) matrix as we often do in our experiments.
\end{remark}

More generally, one can parameterize the operation in the Lie algebra by first forming a skew Hermitian convolution operation $g_{\conv}:\mathbb{C}^{n \times d} \to \mathbb{C}^{n \times d}$ and then applying the exponential map. This approach has the benefit that it can be fully implemented using real numbers to obtain an orthogonal operator by enforcing constraints in the real part of the weight matrix $\mW$ only.

\begin{definition}[Lie orthogonal/unitary graph convolution (Lie UniConv)]\label{def:lie_orthogonal_conv}
    Given an undirected graph $\mathcal{G}$ over $n$ nodes with adjacency matrix $\mA \in \mathbb{R}^{n \times n}$, Lie unitary/orthogonal graph convolution (OrthoConv) $f_{\Uconv}:\mathbb{C}^{n \times d} \to \mathbb{C}^{n \times d}$ takes the form
    \begin{equation} \label{eq:unitary_graph_convolution_lie_algebra}
        \begin{split}
            f_{\Uconv}(\mX) &= \exp(g_{\conv})(\mX) = \sum_{k=0}^\infty \frac{g_{\conv}^{(k)}(\mX)}{k!} =  \mX + g_{\conv}(\mX) + \frac{1}{2} g_{\conv} (g_{\conv}(\mX)) + \cdots , \\
            g_{\conv}(\mX) &= \mA \mX \mW, \quad \mW + \mW^\dagger = 0.
        \end{split}
    \end{equation}
\end{definition}
The operation of $g_{\conv}$ can be represented as a vector-matrix operation $\operatorname{vec}(g_{\conv}(\mX)) = \mA \otimes \mW^\top \operatorname{vec}(X)$ where $\mA \otimes \mW^\top$ belongs in the Lie algebra. If $\mW$ is real-valued, the above returns an orthogonal map since the exponential of a real-valued matrix is real-valued.

\paragraph{Implementing the exponential map}
The exponential map in \Cref{def:unitary_graph_conv,def:lie_orthogonal_conv} can be performed using accurate approximations with typically constant factor overhead in runtime. We use the simple $K$-th order Taylor approximation
\begin{equation}
    \exp(\mM) = \sum_{k=0}^K \frac{\mM^k}{k!} + O\left(\frac{\|\mM\|^{K+1}}{(K+1)!}\right).
\end{equation}
For experiments, we find that setting $K=10$ suffices in all cases as the error exponentially decreases with $K$. Various other accurate and efficient approximations exist as detailed in  \Cref{app:exp_map_approx}. We also refer the reader to \Cref{app:architectural_considerations} for other implementation details associated to handling complex numbers, initialization, etc.

\subsection{Generalized unitary convolutions}

In the more general setting, we are concerned with parameterizing unitary operations of the form 
\begin{equation} \label{eq:general_conv}
    \conv_G(\mX) = \sum_{i=1}^{m} \mT_i \mX \mW_i,
\end{equation}
where $\mT_1, \dots, \mT_m \in \mathbb{C}^{n \times n}$ are linear operators equivariant to the group $G$ and $\mW_1, \dots, \mW_m \in \mathbb{C}^{c \times c}$ are parameterized weight matrices (e.g., set $\mT_k = \mA^{k-1}$ to recover graph convolution). 
One can enforce and parameterize unitary convolutions in \Cref{eq:general_conv} in the Lie algebra basis or in the Fourier domain as detailed in \Cref{app:fourier_group}.

\begin{figure}[h]
\centering
\begin{algorithm}[H]
\caption{Unitary map from Lie algebra}
\small
\begin{algorithmic}[1]
\Require equivariant linear operator $\mL \in \mathbb{C}^{n} \to \mathbb{C}^{n}$
\Require vector $\mathbf{x} \in \mathbb{C}^{n}$
\State $\widetilde{\mL} = \frac{1}{2}(\mL - \mL^\dagger)$ (skew symmetrize operator)
\State \textbf{return} $\exp(\widetilde{\mL}) (\mathbf{x})$ (or approximation thereof)
\end{algorithmic}
\label{alg:lie_algebra}
\end{algorithm}

\end{figure}

In the Lie algebraic setting (\Cref{alg:lie_algebra}), one explicitly parameterizes operators in the Lie algebra or orthogonally projects arbitrary linear operators onto this basis by the mapping $\mX \mapsto (\mX - \mX^\dagger)/2$. This parameterization is particularly simple to implement (it is a linear basis) and unitary operators are subsequently implemented by applying the exponential map. This setting covers previous implementations of unitary RNNs and CNNs \cite{lezcano2019cheap,singla2021skew} and is detailed in \Cref{sec:unitary_graph_convolution} for GNNs. 
\begin{example}[Convolution on regular representation (Lie algebra)]
    Given a group $G$ and vector space $\mathcal{V}$ of dimension $|G|$ with basis $\{\ve_g:g \in G\}$, then the left action $\mT_g$ (right action $\mR_g$) of any $g \in G$ is a permutation $\mT_g \ve_{h} = \ve_{g^{-1}h}$ ($\mR_g \ve_{h} = \ve_{hg}$). Let $\vx \in \mathbb{C}^{|G|}$ be the vectorized form of an input function $x:G \to \mathbb{C}$ and $m:G \to \mathbb{C}$ the filter for convolution\footnote{Typically called cross-correlation in mathematics literature.} 
    \begin{equation}
        (m \star x) (u) = \sum_{v \in G} m(u^{-1}v)x(v)\Longleftrightarrow \conv_G(\vx) = \left[ \sum_{g \in G} m(g) \mR_g \right] \vx .
    \end{equation}
    Parameterizing operations on the Lie algebra simply requires that $m(g)=m(g^{-1})^*$ since $\mR_g^{-1}=\mR_g^\dagger$.
\end{example}

An example implementation of the above for a toy learning task on the dihedral group is in \Cref{app:dihedral}. One can also generally implement convolutions in the (block diagonal) Fourier basis of the graph or group (see \Cref{app:fourier_group} and \Cref{alg:fourier_basis}). Here, one employs a Fourier operator which block diagonalizes the input into its irreducible representations or some spectral representation. Fourier representations often have the advantage of being faster to implement due to efficient Fourier transforms. Since we do not use this in our experiments, we defer the details to \Cref{app:fourier_group}.

\section{Properties and theoretical guarantees}

Unitary operators are now well studied in the context of neural networks and have various properties that are useful in naturally enhancing stability and performance of learning architectures \cite{arjovsky2016unitary,saxe2013exact,lezcano2019cheap,jing2017tunable,kiani2022projunn}. These properties and their theoretical guarantees are outlined here. We defer all proofs to \Cref{app:deferred_proofs}, many of which follow immediately from the definition of unitarity.

Throughout, we will assume that convolution operators act on a vector space $\mathcal{V}$ and are built from a basis of linear operators that is equivariant to input and output representation $\rho(g)$ of a group $G$. We set input/output representations to be equal so that the exponential map of an equivariant operator is itself equivariant. 

\begin{fact}[Basic properties] \label{fact:properties}
    Any unitary convolution $f_{\Uconv}:\mathcal{V}\to \mathcal{V}$ built from \Cref{alg:lie_algebra,alg:fourier_basis} meets the basic properties:
    \begin{equation}
        \begin{split}
            \text{(invertibility):} \quad & \exists \; f_{\Uconv}^{-1}:\mathcal{V}\to \mathcal{V} \text{ such that } \forall \vx \in \mathcal{V}: f_{\Uconv}^{-1}(f_{\Uconv}(\vx)) = \vx,  \\
            \text{(isometry):} \quad & \forall \vx \in \mathcal{V}: \; \|f_{\Uconv}(\vx)\| =\|\vx\|,  \\
            \text{(equivariance):} \quad & \rho(g) \circ f_{\Uconv} = f_{\Uconv} \circ \rho(g).
        \end{split}
    \end{equation}
\end{fact}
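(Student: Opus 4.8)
The plan is to observe that both \Cref{alg:lie_algebra,alg:fourier_basis} produce a map of the form $f_{\Uconv} = \exp(\widetilde{\mL})$, where $\widetilde{\mL} = \tfrac{1}{2}(\mL - \mL^\dagger)$ is skew-Hermitian by construction and equivariant; all three properties then reduce to standard facts about the exponential of a skew-Hermitian equivariant operator on the finite-dimensional space $\mathcal{V}$. In the Fourier construction of \Cref{alg:fourier_basis} the same exponential appears conjugated by a unitary Fourier operator that block-diagonalizes $\rho$; since conjugation by a unitary preserves unitarity and respects the block structure dictated by Schur's lemma, the argument below transfers essentially verbatim, so I would focus on the Lie-algebraic case.

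For isometry and invertibility, first I would record that $\widetilde{\mL}^\dagger = -\widetilde{\mL}$, so $\exp(\widetilde{\mL})^\dagger = \exp(\widetilde{\mL}^\dagger) = \exp(-\widetilde{\mL})$. Since $\widetilde{\mL}$ and $-\widetilde{\mL}$ commute, the exponentials multiply, giving $\exp(\widetilde{\mL})\exp(-\widetilde{\mL}) = \exp(\widetilde{\mL} - \widetilde{\mL}) = \mI$; hence $f_{\Uconv}$ is unitary. Isometry is then immediate from $\|f_{\Uconv}(\vx)\|^2 = \vx^\dagger \exp(\widetilde{\mL})^\dagger \exp(\widetilde{\mL}) \vx = \vx^\dagger \vx = \|\vx\|^2$, and invertibility holds with the explicit inverse $f_{\Uconv}^{-1} = \exp(-\widetilde{\mL}) = f_{\Uconv}^\dagger$, which is itself a map of the same form.

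The substantive step is equivariance. By assumption $\mL$ commutes with every $\rho(g)$, i.e.\ $\mL \rho(g) = \rho(g) \mL$. First I would verify that the adjoint $\mL^\dagger$ is also equivariant: taking the conjugate transpose of this identity gives $\rho(g)^\dagger \mL^\dagger = \mL^\dagger \rho(g)^\dagger$, and because the representations here are unitary (finite-group representations, including the permutation/regular representations of interest, may be taken unitary), $\rho(g)^\dagger = \rho(g)^{-1} = \rho(g^{-1})$; substituting $g \mapsto g^{-1}$ then yields $\rho(g)\mL^\dagger = \mL^\dagger \rho(g)$ for all $g$. Consequently $\widetilde{\mL}$, a linear combination of equivariant operators, is equivariant, so $\widetilde{\mL}^k$ commutes with $\rho(g)$ for every $k$, and summing the power series gives $\exp(\widetilde{\mL})\rho(g) = \rho(g)\exp(\widetilde{\mL})$, i.e.\ $\rho(g)\circ f_{\Uconv} = f_{\Uconv}\circ\rho(g)$.

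None of these steps presents a serious obstacle, which is why the statement is a \emph{Fact} rather than a theorem. The one point meriting care is the equivariance of the adjoint $\mL^\dagger$, which genuinely uses unitarity of $\rho$; without it, skew-symmetrizing could break equivariance. I would therefore make the standing hypothesis that $\rho$ is a unitary representation explicit, noting it is automatic in the finite-group setting assumed throughout.
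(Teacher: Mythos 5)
Your proof is correct and takes essentially the same route as the paper's much terser argument: isometry and invertibility from unitarity of $\exp(\widetilde{\mL})$, and equivariance by commuting $\rho(g)$ through the power series of the equivariant skew-Hermitian operator, with the Fourier case (\Cref{alg:fourier_basis}) handled by appeal to the block-diagonal/convolution-theorem structure. The one point where you go beyond the paper is in explicitly verifying that $\mL^\dagger$ (hence $\widetilde{\mL} = \tfrac{1}{2}(\mL - \mL^\dagger)$) is equivariant, which genuinely requires unitarity of $\rho$ --- the paper's proof simply asserts that the exponential is a composition of equivariant linear operators and leaves this step implicit --- and this added care is warranted, since skew-Hermitization can break equivariance for non-unitary representations.
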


A simple corollary of the above isometry property leveraged in prior work on unitary CNNs \cite{trockman2021orthogonalizing,sedghi2018singular} is that unitary matrices naturally provide robustness guarantees and a provable means to bound the effects of adversarial perturbations (see \Cref{cor:robustness} in \Cref{app:deferred_proofs}). For graphs, we note that the properties above are generally impossible to obtain with graph convolution operations that perform a single message passing step as we show below.

\begin{restatable}[]{proposition}{nounitaryreal} \label{prop:no_unitary_real}
    Let $f_{\conv}:\mathbb{R}^{n \times d} \to \mathbb{R}^{n \times d}$ be a graph convolution layer of the form
    \begin{equation}
        f_{\conv}(\mX, \mA) =   \mX \mW_0 +  \mA \mX \mW_1,
    \end{equation}
    where $\mW_0, \mW_1 \in \mathbb{R}^{d \times d}$ are parameterized matrices. The linear map $f(\cdot, \mA):\mathbb{R}^{n \times d} \to \mathbb{R}^{n \times d}$ is orthogonal for all adjacency matrices $\mA$ of undirected graphs only if $\mW_1= \bm 0$ and $\mW_0 \in O(d)$ is orthogonal. Furthermore, denoting $\mJ_{\mA} \in \mathbb{R}^{nd \times nd}$ as the Jacobian matrix of the map $f_{\conv}(\cdot, \mA)$, for any choice of $\mW_0, \mW_1$, there always exists a normalized adjacency matrix $\hat{\mA}$ such that 
    \begin{equation}
        \left\| \mJ_{\hat{\mA}}^\top \mJ_{\hat{\mA}} - \mI \right\| \geq \frac{\|\mW_1\|_F^2}{2d},
    \end{equation}
    where $\|\mM\|$ is the operator norm of matrix $\mM$.
\end{restatable}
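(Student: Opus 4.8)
The plan is to vectorize the linear map and exploit the resulting Kronecker structure together with the spectral decomposition of the (symmetric) adjacency operator. Writing $\operatorname{vec}(\cdot)$ for column-stacking and using $\operatorname{vec}(\mA \mX \mW) = (\mW^\top \otimes \mA)\operatorname{vec}(\mX)$, the Jacobian is
\[
  \mJ_\mA = \mW_0^\top \otimes \mI + \mW_1^\top \otimes \mA .
\]
Since the graph is undirected, $\mA$ is symmetric; let $\vv$ be a unit eigenvector with $\mA\vv = \mu\vv$. The key observation I would use is that the $d$-dimensional subspace $\R^{d}\otimes\vv$ is invariant under both $\mJ_\mA$ and $\mJ_\mA^\top$ (this is exactly where symmetry of $\mA$ enters), and that $\mJ_\mA^\top\mJ_\mA - \mI$ acts on it as the $d\times d$ symmetric matrix
\[
  \mM(\mu) := (\mW_0 + \mu\mW_1)(\mW_0 + \mu\mW_1)^\top - \mI
           = \big(\mW_0\mW_0^\top - \mI\big) + \mu\big(\mW_0\mW_1^\top + \mW_1\mW_0^\top\big) + \mu^2\, \mW_1\mW_1^\top .
\]
Everything then reduces to a matrix-valued quadratic in the eigenvalue $\mu$ whose leading coefficient is precisely $\mW_1\mW_1^\top$, and both claims follow by isolating that coefficient via $\mM(1)+\mM(-1)-2\mM(0) = 2\,\mW_1\mW_1^\top$.

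For the first (qualitative) claim, orthogonality of $f(\cdot,\mA)$ for every $\mA$ forces $\mM(\mu)=\bm 0$ at every eigenvalue $\mu$ realizable by some undirected graph. Evaluating at $\mu=0$ (the empty graph) gives $\mW_0\mW_0^\top=\mI$, i.e. $\mW_0\in O(d)$; evaluating at $\mu=\pm1$ (realizable, e.g., by a single edge, whose adjacency matrix has eigenvalues $\pm1$) and applying the identity above yields $\mW_1\mW_1^\top=\bm 0$, hence $\mW_1=\bm 0$ after taking the trace.

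For the quantitative claim I would pass to traces. Because $\mM(\mu)$ is symmetric, $\big\|\mJ_\mA^\top\mJ_\mA - \mI\big\| \ge \|\mM(\mu)\| \ge |\Tr \mM(\mu)|/d$ for any eigenvalue $\mu$ of $\mA$. The scalar $p(\mu):=\Tr \mM(\mu)$ is a quadratic in $\mu$ with leading coefficient $\Tr(\mW_1\mW_1^\top)=\|\mW_1\|_F^2$, so the same combination gives $p(1)+p(-1)-2p(0)=2\|\mW_1\|_F^2$, and therefore $\max\{|p(0)|,|p(1)|,|p(-1)|\}\ge \|\mW_1\|_F^2/2$. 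It then remains to exhibit a single normalized adjacency matrix whose spectrum contains all of $\{-1,0,1\}$: the star $K_{1,n-1}$ works, since its normalized adjacency $\hat{\mA}$ has spectrum $\{1,\,0^{(n-2)},\,-1\}$ and no isolated vertices. Taking $\hat{\mA}$ to be this star and selecting the eigenvalue attaining the maximum yields $\big\|\mJ_{\hat{\mA}}^\top \mJ_{\hat{\mA}} - \mI\big\| \ge \|\mW_1\|_F^2/(2d)$.

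The conceptual heart is the reduction to a matrix-valued quadratic in $\mu$; once that is in place both claims come from the same linear-algebra identity extracting the $\mu^2$ coefficient. The steps needing most care are the vectorization/invariant-subspace bookkeeping (confirming $\R^{d}\otimes\vv$ is preserved and correctly identifying the restricted operator, which uses $\mA=\mA^\top$) and the realizability step — verifying that $\{-1,0,1\}$ genuinely appear in the spectrum of honest (normalized) adjacency matrices while respecting the degree normalization and isolated-vertex conventions, which is exactly why the star graph is the convenient choice. The remaining manipulations (the trace inequality for symmetric matrices and the expansion of $\mM(\mu)$) are routine.
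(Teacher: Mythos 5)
Your proof is correct, and it diverges from the paper's own argument in ways worth noting. Both start from the same Kronecker-structured Jacobian (the paper writes $\mJ_{\mA} = \mI \otimes \mW_0^\top + \mA \otimes \mW_1^\top$, the transpose-ordered twin of yours), and both probe it with graphs whose spectra contain $0$ and $\pm 1$. The difference is in execution. The paper never diagonalizes $\mA$: for the first claim it evaluates $\mJ_{\mA}\mJ_{\mA}^\top$ on two $2$-node graphs (the empty graph $\mA_0 = \bm{0}$ and the single edge $\mA_1$, exploiting $\mA_1^2 = \mI$), and for the quantitative claim it compares the traces $\Tr(\mJ_{\mA_0}\mJ_{\mA_0}^\top) = 2\|\mW_0\|_F^2$ and $\Tr(\mJ_{\mA_1}\mJ_{\mA_1}^\top) = 2\|\mW_0\|_F^2 + 2\|\mW_1\|_F^2$ against $\Tr(\mI) = 2d$ by pigeonhole --- at least one of the two graphs must witness a trace deviation of $\|\mW_1\|_F^2$ --- and then converts trace to operator norm. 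You instead compress onto the invariant subspaces $\mathbb{R}^d \otimes \vv$, obtaining the matrix quadratic $\mM(\mu)$, and extract its leading coefficient with the second difference $\mM(1)+\mM(-1)-2\mM(0) = 2\mW_1\mW_1^\top$; the star $K_{1,n-1}$, whose normalized spectrum is $\{1, 0^{(n-2)}, -1\}$, then serves as a single witness graph. Your route buys two things: the witness $\hat{\mA}$ is universal (independent of $\mW_0,\mW_1$), whereas the paper's witness is one of two graphs chosen according to the weights; and it sidesteps a genuine wrinkle in the paper's proof, namely that its quantitative part uses the empty graph as a ``normalized adjacency matrix'' even though its degree matrix is $\bm 0$, so $\mD^{-1/2}$ is only defined under a pseudoinverse convention --- your star has no isolated vertices, so its normalization is unambiguous. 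What the paper's route buys is economy: no spectral decomposition or invariant-subspace bookkeeping, with everything carried out in the minimal ambient dimension $n=2$, and both arguments land on the same constant $\|\mW_1\|_F^2/(2d)$.
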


The above shows that one must apply higher powers of $\mA$ as in the exponential map to achieve a linear operator that is close to orthogonal.

\paragraph{Oversmoothing}
During the training of GNNs we often observe that the features of neighboring nodes become more similar as the depth of the networks (i.e., the number of message-passing iterations) increases. This ``oversmoothing" phenomenon has a strong connection to the spectral properties of graphs where convergence of a function on the graph is measured through the Dirichlet form\footnote{This quantity has various equivalent names including the Dirichlet energy, local variance, and Laplacian quadratic form.} or its normalized variant also termed the Rayleigh quotient.
\begin{definition}[Rayleigh quotient \cite{chung1997spectral}] \label{def:rayleigh_quotient}
    Given an undirected graph $\mathcal{G}=(V,E)$ on $|V|=n$ nodes with adjacency matrix $\mA \in \{0,1\}^{n \times n}$, let $\mD \in \mathbb{R}^{n \times n}$ be a diagonal matrix where the $i$-th entry $\mD_{ii} = d_i$ and $d_i$ is the degree of node $i$. Let $f:V\to\mathbb{C}^d$ be a function from nodes to features. Then the Rayleigh quotient $R_\mathcal{G}(f)$ is equal to
    \begin{equation}
        R_{\mathcal{G}}(f) = \frac{1}{2}\frac{\sum_{(u,v) \in E} \left\|\frac{f(u)}{\sqrt{d_u}} - \frac{f(v)}{\sqrt{d_v}}\right\|^2 }{\sum_{w \in V} \|f(w)\|^2} = \frac{\Tr\left( \mX^{\dagger} (\mI-\widetilde{\mA}) \mX \right)}{\|\mX\|_F^2},
    \end{equation}
    where $\widetilde{\mA}=\mD^{-1/2}\mA\mD^{-1/2}$ is the normalized adjacency matrix and $\mX \in \mathbb{C}^{n \times d}$ is a matrix with the $i$-th row set to feature vector $f(i)$. We will at times abuse notation and let $\mX$ be an input to $R_{\mathcal{G}}(\mX)$. 
\end{definition}

Given an undirected graph $\mathcal{G}$ on $n$ nodes with normalized adjacency matrix $\widetilde{\mA} = \mD^{-1/2}\mA\mD^{-1/2}$, we compare the Rayleigh quotient of normalized vanilla and unitary convolution.
First, it is straightforward to show that the Rayleigh quotient is invariant to unitary transformations giving a proof that unitary graph convolution avoids oversmoothing.

\begin{restatable}[Invariance of Rayleigh quotient]{proposition}{oversmoothingunitary}
    Given an undirected graph $\mathcal{G}$ on $n$ nodes with normalized adjacency matrix $\widetilde{\mA} = \mD^{-1/2}\mA\mD^{-1/2}$, the Rayleigh quotient $R_{\mathcal{G}}(\mX) = R_{\mathcal{G}}(f_{\Uconv}(\mX))$ is invariant under normalized unitary or orthogonal graph convolution (see \Cref{def:unitary_graph_conv,def:lie_orthogonal_conv}).
\end{restatable}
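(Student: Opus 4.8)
The plan is to show separately that the numerator $\Tr(\mX^\dagger(\mI-\widetilde{\mA})\mX)$ and the denominator $\|\mX\|_F^2$ of the Rayleigh quotient are each unchanged when $\mX$ is replaced by $\mY := f_{\Uconv}(\mX)$, where the convolution is taken in its normalized form (i.e.\ with $\mA$ replaced by $\widetilde{\mA}$). Writing $\mathcal{L} := \mI - \widetilde{\mA}$ for the normalized Laplacian, the denominator invariance is immediate from the isometry property of \Cref{fact:properties}: a unitary convolution preserves the Euclidean norm of the vectorized features, so $\|\mY\|_F = \|\mX\|_F$. Everything therefore reduces to showing $\Tr(\mY^\dagger\mathcal{L}\mY) = \Tr(\mX^\dagger\mathcal{L}\mX)$, and the single underlying fact I would exploit in both cases is that the node-space part of the convolution is a function of $\widetilde{\mA}$ and hence commutes with $\mathcal{L}$.

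For UniConv, $\mY = \exp(i\widetilde{\mA}t)\mX\mU$ with $\mU\mU^\dagger = \mI$. Since $\widetilde{\mA}$ is real symmetric, it is Hermitian, so $\exp(i\widetilde{\mA}t)^\dagger = \exp(-i\widetilde{\mA}t)$, and $\exp(\pm i\widetilde{\mA}t)$ commutes with $\mathcal{L} = \mI - \widetilde{\mA}$ because both are power series in $\widetilde{\mA}$. Substituting into the numerator gives
\[
\Tr(\mY^\dagger\mathcal{L}\mY) = \Tr\!\big(\mU^\dagger\mX^\dagger\exp(-i\widetilde{\mA}t)\,\mathcal{L}\,\exp(i\widetilde{\mA}t)\mX\mU\big),
\]
and I would push $\exp(-i\widetilde{\mA}t)$ past $\mathcal{L}$ so that $\exp(-i\widetilde{\mA}t)\exp(i\widetilde{\mA}t) = \mI$ cancels, then use cyclicity of the trace together with $\mU\mU^\dagger = \mI$ to strip off $\mU$, leaving exactly $\Tr(\mX^\dagger\mathcal{L}\mX)$.

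For Lie UniConv I would pass to the vectorized picture, where $\operatorname{vec}(\mY) = \exp(G)\operatorname{vec}(\mX)$ with generator $G = \widetilde{\mA}\otimes\mW^\top$ and $\mW + \mW^\dagger = 0$. Two observations finish the argument: (i) $G$ is skew-Hermitian, as noted after \Cref{def:lie_orthogonal_conv} (since $\widetilde{\mA}$ is Hermitian and $\mW^\top$ is skew-Hermitian), so $\exp(G)$ is unitary; and (ii) $G$ commutes with $\mathcal{L}\otimes\mI_d$, because $\widetilde{\mA}(\mI-\widetilde{\mA}) = (\mI-\widetilde{\mA})\widetilde{\mA}$ forces $(\widetilde{\mA}\otimes\mW^\top)(\mathcal{L}\otimes\mI_d) = (\mathcal{L}\otimes\mI_d)(\widetilde{\mA}\otimes\mW^\top)$. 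Since the numerator equals $\operatorname{vec}(\mX)^\dagger(\mathcal{L}\otimes\mI_d)\operatorname{vec}(\mX)$, conjugating by $\exp(G)$ and using that $\exp(G)$ commutes with $\mathcal{L}\otimes\mI_d$ (from (ii)) while $\exp(G)^\dagger\exp(G) = \mI$ (from (i)) leaves the numerator unchanged.

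The routine parts — unitarity of $\exp(i\widetilde{\mA}t)$ and of $\exp(G)$, and the Frobenius-norm invariance of the denominator — are essentially restatements of \Cref{fact:properties}. The only step requiring genuine care is the commutation between the convolution's generator and the Laplacian, and it is precisely here that the normalization matters: the same matrix $\widetilde{\mA}$ must appear in both $\mathcal{L}$ and the convolution for the two to commute. I expect this commutation argument — especially organizing the Lie UniConv case through its Kronecker structure — to be the main point, with everything else following by cyclicity of the trace and $\mU\mU^\dagger = \mI$.
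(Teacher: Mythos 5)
Your proposal is correct and follows essentially the same route as the paper's proof: unitary invariance of the Frobenius norm for the denominator, and for the numerator the commutation of the node-space exponential with $\mI-\widetilde{\mA}$ (both being functions of $\widetilde{\mA}$), combined with cyclicity of the trace and $\mU\mU^\dagger=\mI$ for UniConv and the vectorized Kronecker picture $\exp(\widetilde{\mA}\otimes\mW^\top)$ for Lie UniConv. Your explicit remark that the normalized $\widetilde{\mA}$ must appear in both the convolution and the Laplacian is a slightly cleaner statement of the key point than the paper's phrasing, but the argument is the same.
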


In contrast, oversmoothing commonly occurs with vanilla graph convolution and has been proven to occur in a variety of settings \cite{rusch2023survey,cai2020note,bodnar2022neural,keriven2022not}. To illustrate this, we exhibit a simple setting below commonly found at initialization where the parameterized matrix is set to be an orthogonal matrix and input features are random. Here, the magnitude of oversmoothing concentrates around its average and grows with the value of $\Tr(\widetilde{\mA}^3)$ which corresponds to the (weighted) number of triangles in the graph.

\begin{restatable}[]{proposition}{rayleighvanilla}
\label{prop:rayleigh_vanilla}
    Given a simple undirected graph $\mathcal{G}$ on $n$ nodes with normalized adjacency matrix $\widetilde{\mA} = \mD^{-1/2}\mA\mD^{-1/2}$ and node degree bounded by $D$, let $\mX \in \mathbb{R}^{n \times d}$ have rows drawn i.i.d. from the uniform distribution on the hypersphere in dimension $d$. Let $f_{\conv}(\mX) = \widetilde{\mA} \mX  \mW$ denote convolution with orthogonal feature transformation matrix $\mW \in O(d)$. Then, the event below holds with probability $1-\exp(-\Omega(\sqrt{n}))$:
    \begin{equation}
        R_{\mathcal{G}}(\mX) \geq 1-O\left(\frac{1}{n^{1/4}} \right)
        \quad \text{and} 
        \quad
        R_{\mathcal{G}}(f_{\conv}(\mX)) \leq 1 -\frac{\Tr(\widetilde{\mA}^3)}{\Tr(\widetilde{\mA}^2)} + O\left(\frac{1}{n^{1/4}} \right).
    \end{equation}
\end{restatable}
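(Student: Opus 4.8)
The plan is to reduce both Rayleigh quotients to quadratic forms $\Tr(\mX^\top \mM \mX)$ in the random feature matrix, compute their means exactly, and then control their fluctuations by concentration of measure on the product of spheres; the ratio and a degree-based lower bound on the denominator then finish the second estimate. First I would rewrite the quantities. Since each row of $\mX$ lies on the unit sphere, $\|\mX\|_F^2 = n$ deterministically, and using $\mW\mW^\top = \mI$ together with cyclicity of the trace (the factors of $\mW$ cancel because $\widetilde{\mA}$ and $\mW$ act by left- and right-multiplication) gives
\begin{equation*}
    R_{\mathcal{G}}(\mX) = 1 - \tfrac{1}{n}\Tr(\mX^\top \widetilde{\mA}\mX), \qquad R_{\mathcal{G}}(f_{\conv}(\mX)) = 1 - \frac{\Tr(\mX^\top \widetilde{\mA}^3\mX)}{\Tr(\mX^\top \widetilde{\mA}^2\mX)}.
\end{equation*}
Writing $\vv = \operatorname{vec}(\mX^\top) \in \mathbb{R}^{nd}$, each relevant trace is a quadratic form $\vv^\top(\mM\otimes \mI)\vv$ with $\mM \in \{\widetilde{\mA},\widetilde{\mA}^2,\widetilde{\mA}^3\}$.

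Next I would compute the means. Because the rows are independent and isotropic with $\E[\vx_i\vx_i^\top] = \tfrac1d \mI$ and $\|\vx_i\|=1$, the off-diagonal terms vanish in expectation and the diagonal terms are deterministic, so $\E[\Tr(\mX^\top \mM\mX)] = \Tr(\mM)$; in particular $\Tr(\widetilde{\mA})=0$ since the graph is simple. The concentration step is the technical core. Viewing $F(\vv) = \vv^\top(\mM\otimes\mI)\vv$ on the product of $n$ unit spheres, its Euclidean gradient obeys $\|\nabla F\| = 2\|(\mM\otimes \mI)\vv\| \le 2\|\mM\|\sqrt n \le 2\sqrt n$, using $\|\widetilde{\mA}^k\|\le 1$ and $\|\vv\|=\sqrt n$ on the domain; since the product of spheres sits inside the convex ball of radius $\sqrt n$, $F$ is $2\sqrt n$-Lipschitz there. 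Lévy-type concentration on the product of spheres (tensorization of the log-Sobolev inequality for the uniform measure on $S^{d-1}$) then yields $\Pr[|\Tr(\mX^\top\mM\mX) - \Tr(\mM)| \ge t] \le C\exp(-c(d-1)t^2/n)$. Choosing $t = \Theta(n^{3/4})$ makes the exponent $\Theta(\sqrt n)$, so each failure probability is $\exp(-\Omega(\sqrt n))$, preserved under a union bound over the three traces.

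Finally I would assemble the estimates. For the first, $\tfrac1n|\Tr(\mX^\top\widetilde{\mA}\mX)| \le \tfrac1n\cdot O(n^{3/4}) = O(n^{-1/4})$ gives $R_{\mathcal{G}}(\mX)\ge 1 - O(n^{-1/4})$. For the second, I would lower-bound the denominator: $\Tr(\widetilde{\mA}^2) = \sum_{(i,j)\in E}(d_i d_j)^{-1} \ge n/D$ by the degree bound, so it is $\Theta(n)$. Writing $a = \Tr(\widetilde{\mA}^3)$, $b = \Tr(\widetilde{\mA}^2)$, and deviations $\delta_N,\delta_D = O(n^{3/4})$, the identity $\frac{a+\delta_N}{b+\delta_D} - \frac ab = \frac{b\delta_N - a\delta_D}{b(b+\delta_D)}$, combined with $|a|,b \le n$ and $b(b+\delta_D) = \Omega(n^2)$, bounds the ratio error by $O(n^{-1/4})$ and yields the claim.

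The step I expect to be the main obstacle is obtaining the $\exp(-\Omega(\sqrt n))$ tail: it forces the concentration radius up to $n^{3/4}$ rather than the $\sqrt n$ one gets from a variance/Chebyshev argument, which is precisely what produces the $O(n^{-1/4})$ error terms. Care is also needed in verifying the quadratic form is Lipschitz only after restriction to the bounded domain of the product of spheres, and in keeping the denominator $\Theta(n)$ so that the ratio manipulation is justified.
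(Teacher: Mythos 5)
Your proposal is correct, and its skeleton coincides with the paper's proof: rewrite both Rayleigh quotients as traces of quadratic forms (the orthogonal $\mW$ cancels by cyclicity), compute the means $\Tr(\widetilde{\mA})=0$, $\Tr(\widetilde{\mA}^2)$, $\Tr(\widetilde{\mA}^3)$, concentrate the numerator and denominator separately at scale $n^{3/4}$, take a union bound, and finish with the ratio perturbation argument using $\Tr(\widetilde{\mA}^2) = \Theta(n)$ from the degree bound. The one genuine difference is the concentration lemma. The paper treats each quantity as a function of the $n$ independent rows $\vx_1,\dots,\vx_n$ and applies Azuma--Hoeffding (bounded differences): since $\|\mI - \widetilde{\mA}\|\le 2$, replacing one row moves the normalized quotient by $O(1/n)$, giving $\exp(-\Omega(n\epsilon^2))$ tails directly. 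You instead restrict the quadratic form $\vv^\top(\mM\otimes\mI)\vv$ to the product of $n$ unit spheres, bound its Lipschitz constant by $2\sqrt{n}$, and invoke tensorized log-Sobolev/L\'evy concentration, obtaining $\exp\bigl(-c\,d\,t^2/n\bigr)$ tails. Both routes deliver the required $\exp(-\Omega(\sqrt{n}))$ at $t=\Theta(n^{3/4})$; the paper's bounded-differences argument is more elementary and needs no manifold machinery, while yours buys an extra factor of $d$ in the exponent (concentration improving with feature dimension), which is superfluous here. Two small points to tidy: your tail constant $(d-1)$ degenerates at $d=1$, where $S^0$ is a two-point space that still satisfies a log-Sobolev inequality with an absolute constant, so you should write a constant $c_d>0$ rather than $d-1$; and your denominator bound $\Tr(\widetilde{\mA}^2)\ge n/D$ implicitly assumes no isolated vertices, which is harmless since $\mD^{-1/2}$ must be well defined (the paper simply asserts $\Tr(\widetilde{\mA}^2)=\Theta(n)$ from bounded degree).
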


\paragraph{Vanishing/Exploding gradients}
A commonly observed issue in training deep neural networks, especially RNNs with evolving hidden states, is that gradients can exponentially grow or vanish with depth \cite{hochreiter1997long,goodfellow2016deep}. In fact, one of the original motivations for using unitary matrices in RNNs is to directly avoid this vanishing/exploding gradient problem \cite{arjovsky2016unitary,lezcano2019cheap,jing2017tunable,henaff2016recurrent}. From a theoretical view,
prior work has shown that carefully initialized layers have Jacobians that meet variants of the dynamical isometry property commonly studied in the mean field theory literature characterizing the growth/decay over layers of the network \cite{saxe2013exact,xiao2018dynamical,pennington2018emergence}. We analyze a version of this property here and discuss it in the context of our work.
\begin{definition}[Dynamical isometry]
    Given functions $f_1, \dots, f_L:\mathbb{R}^n \to \mathbb{R}^n$, let $F_i = f_i \circ \cdots \circ f_1$. Let $\mJ_{F_i}(\vx)$ be the Jacobian matrix of $F_i$ at $\vx \in \mathbb{R}^n$\footnote{For complex valued inputs, we additionally require that the functions are holomorphic. For sake of simplicity, we will treat functions $f: \mathbb{C}^n \to  \mathbb{C}^n$ as real-valued functions $f:\mathbb{R}^{2n} \to \mathbb{R}^{2n}$.}. The function $F_L = f_L \circ \cdots \circ f_1$ is dynamically isometric up to $\epsilon$ at $\vx \in \mathbb{R}^n$ if there exists orthogonal matrix $\mV \in O(n)$ such that $\left\|\prod_{i=1}^L \mJ_{F_i}(\vx) - \mV \right\| \leq \epsilon $ where $\|\cdot\|$ denotes the operator norm. 
\end{definition}

Network layers that meet the dynamical isometry property generally avoid vanishing/exploding gradients. The particular form we analyze is stricter than those studied in the mean field and Gaussian process literature which analyze the distribution of singular values over the randomness of the weights \cite{pennington2018emergence,xiao2018dynamical}. Unitary convolution layers followed by isometric activations are examples of dynamical isometries that hold throughout training as exemplified below.
\begin{example}
    Compositions of the layer $\operatorname{GroupSort}(f_{\Uconv}(\vx))$ consisting of the unitary convolution layer (\Cref{def:unitary_graph_conv}) followed by the Group Sort activation (\Cref{eq:groupsort}) are perfectly dynamically isometric ($\epsilon = 0$) at all $\vx \in \mathbb{C}^n$\footnote{Technically, this holds for almost all $\vx \in \mathbb{C}^n$ due to non-differentiability of the activation at singular points, but this is handled in practice by choosing a gradient in the sub-differential which meets the criteria.}.
\end{example}

\section{Experiments}
\label{sec:experiments}

\begin{table}[]
    \centering
    \setlength{\tabcolsep}{3pt}
\begin{footnotesize}
\textsc{
\begin{tabular}{l@{\hspace{4pt}}l@{\hspace{4pt}}c@{\hspace{4pt}}c@{\hspace{4pt}}c@{\hspace{4pt}}c@{\hspace{4pt}}}
\toprule
&Method     & \multicolumn{1}{c}{PEPTIDES-FUNC} & \multicolumn{1}{c}{PEPTIDES-STRUCT} & \multicolumn{1}{c}{COCO} & \multicolumn{1}{c}{PASCAL-VOC} \\ 
&           & Test AP $\uparrow$ & Test MAE $\downarrow$ & Test F1 $\uparrow$ & Test F1 $\uparrow$ \\
\midrule
\multirow{4}*{\rotatebox{90}{MP}}
&\hyperlink{cite.kipf2016semi}{GCN} $^\dagger$ & 0.6860 $\pm$ 0.0050 & \underline{0.2460 $\pm$ 0.0007} & 0.1338 $\pm$ 0.0007 & 0.2078 $\pm$ 0.0031 \\
&\hyperlink{cite.xu2018powerful}{GINE} $^\dagger$ & 0.6621 $\pm$ 0.0067 & 0.2473 $\pm$ 0.0017 & 0.2125 $\pm$ 0.0009 & 0.2718 $\pm$ 0.0054 \\
&\hyperlink{cite.bresson2017residual}{GatedGCN} $^\dagger$ & 0.6765 $\pm$ 0.0047 & 0.2477 $\pm$ 0.0009 & 0.2922 $\pm$  0.0018 & 0.3880 $\pm$ 0.0040 \\
&\hyperlink{cite.qiu2024graph}{GUMP}  & 0.6843 $\pm$ 0.0037 & 0.2564 $\pm$ 0.0023 & - & - \\ 
\midrule
\multirow{6}*{\rotatebox{90}{Others}}
&\hyperlink{cite.rampavsek2022recipe}{GPS} $^\dagger$ & 0.6534 $\pm$ 0.0091 & 0.2509 $\pm$ 0.0014 & \textbf{0.3884 $\pm$ 0.0055} & \underline{0.4440 $\pm$ 0.0065} \\
&\hyperlink{cite.pmlr-v202-gutteridge23a}{DRew} & \underline{0.7150 $\pm$ 0.0044} & 0.2536 $\pm$ 0.0015 & - & 0.3314 $\pm$ 0.0024 \\
&\hyperlink{cite.shirzad2023exphormer}{Exphormer} & 0.6527 $\pm$ 0.0043 & 0.2481 $\pm$ 0.0007 & \underline{0.3430 $\pm$ 0.0008} & 0.3960 $\pm$ 0.0027 \\
&\hyperlink{cite.ma2023graph}{GRIT} & 0.6988 $\pm$ 0.0082 & \underline{0.2460 $\pm$ 0.0012} & - & - \\
&\hyperlink{cite.pmlr-v202-he23a}{Graph ViT} & 0.6942 $\pm$ 0.0075 & \underline{0.2449 $\pm$ 0.0016} & - & -  \\
&\hyperlink{cite.tonshoff2023walking}{CRAWL} & \underline{0.7074 $\pm$ 0.0032} & 0.2506 $\pm$ 0.0022  & - & \textbf{0.4588 $\pm$ 0.0079}  \\
\midrule
\multirow{2}*{\rotatebox{90}{Ours}}
& UniGCN & 0.7072 $\pm$ 0.0035 & \textbf{0.2425 $\pm$ 0.0009} & 0.2852 $\pm$ 0.0016 & 0.3516 $\pm$ 0.0070 \\
& Lie UniGCN & \textbf{0.7173 $\pm$ 0.0061} & \underline{0.2460 $\pm$ 0.0011} & \underline{0.3153 $\pm$ 0.0035} & \underline{0.4005 $\pm$ 0.0067} \\
\bottomrule
\multicolumn{6}{l}{$^\dagger$ Reported performance taken from \cite{tonshoff2023did}.} 
\end{tabular}
}
\end{footnotesize}

    \vspace{-0.1cm}
    \caption{Unitary GCN with UniConv (\Cref{def:unitary_graph_conv}) and Lie UniConv (\Cref{def:lie_orthogonal_conv}) layers compared with other GNN architectures on LRGB datasets \cite{dwivedi2022long}. Top performer bolded and second/third underlined. Networks are set to fit within a parameter budget of $500,000$ parameters. Complex numbers are counted as two parameters each. See \Cref{app:experimental_details} for additional details. } 
    \label{tab:peptide_and_zinc}
    \vspace{-0.5cm}
\end{table}

\noindent Our experimental results here show that unitary/orthogonal variants of graph convolutional networks perform competitively on various graph learning tasks. Due to space constraints, we present experiments on additional datasets and architectures in \Cref{app:additional_experiments}. This includes experiments on TUDataset~\cite{Morris+2020} and an instance of unitary group convolutional networks on the dihedral group where the goal is to learn distances between pairs of elements in the dihedral group. Training procedures and hyperparameters are reported in \Cref{app:experimental_details}. Reported results in tables are over the mean plus/minus standard deviation.



\paragraph{Toy model: graph distance}
To analyze the ability of our unitary GNN to learn long-range dependencies, we consider a toy dataset where the aim is to learn the distance between two indicated nodes on a large graph connected as a ring. 
This task is inspired by similar toy models on ring graphs where prior work has shown that message passing architectures fail to learn long-range dependencies between distant nodes \cite{di2023over}.
The particular dataset we analyze consists of a training set of $N=1000$ graphs on $n=100$ nodes where each graph is connected as a ring (see \Cref{fig:ring_example_data}). Node features $\vx_i \in \mathbb{R}$ are a single number set to zero for all but two randomly chosen nodes whose features are set to one. The goal is to predict the distance between these two randomly chosen nodes. For a graph of $n$ nodes, conventional message passing architectures require at least $n/2$ sequential messages to fully learn this dataset. As shown in \Cref{fig:ring_plot_results}, conventional message passing networks fail to learn this task whereas the unitary convolutional architecture succeeds. We refer the reader to \Cref{app:additional_graph_distance} for further details and results for additional architectures.
\begin{figure}[h]
  \begin{minipage}[c]{0.58\textwidth}
    \centering
    \begin{subfigure}[b]{0.25\textwidth}
      \centering
      \includegraphics[width=\textwidth]{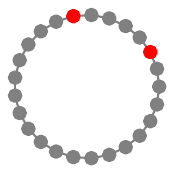}
      \caption{Data sample}
      \label{fig:ring_example_data}
    \end{subfigure}
    \begin{subfigure}[b]{0.73\textwidth}
      \centering
      \includegraphics[width=\textwidth]{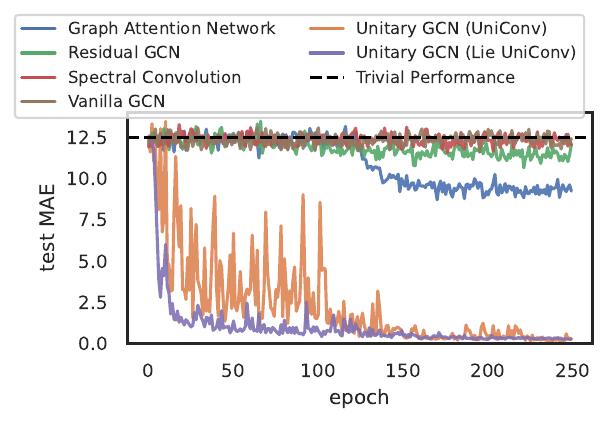}
      \caption{Results for message passing architectures}
      \label{fig:ring_plot_results}
    \end{subfigure}
  \end{minipage}\hfill
  \begin{minipage}[c]{0.4\textwidth}
    \caption{
       (a) Example datapoint on $n=25$ nodes; the target is $y=5$ (distance between red nodes). (b) Results for the ring toy model problem with $100$ nodes where the unitary GCN with UniConv or Lie UniConv layers is the only message passing architecture able to learn successfully. Best performance over networks with $5$, $10$, and $20$ layers is plotted. Other architectures typically perform best with $5$ layers and only learn shorter distances (see \Cref{app:additional_graph_distance}).
    } \label{fig:three graphs}
  \end{minipage}
  \vspace{-0.5cm}
\end{figure}


\paragraph{Long Range Graph Benchmark (LRGB)} We consider the Peptides, Coco, and Pascal datatsets from the Long Range Graph Benchmark (LRGB) \cite{dwivedi2022long}. There are two tasks associated with Peptides, a peptide function classification task (Peptides-func) and a regression task (Peptides-struct). Coco and Pascal are node classification tasks.\Cref{tab:peptide_and_zinc} shows that the Unitary GCN outperforms standard message passing architectures and is competitive with other state of the art architectures as well, many of which employ global attention mechanisms. This provides evidence 
that the unitary GCN is nearly as effective at learning long-range signals.

\paragraph{Heterophilous Graph Dataset} For node classification, we consider the Heterophilous Graph Dataset proposed by \cite{platonov2023critical}. The dataset contains the heterophilous graphs Roman-empire, Amazon-ratings, Minesweeper, Tolokers, and Questions, which are often considered as a benchmark for evaluating the performance of GNNs on graphs where connected nodes have dissimilar labels and features. Our results in \Cref{tab:platonov} show that the unitary GCN outperforms the baseline message passing and graph transformer models. Given the heterophilous nature of this dataset, these findings reinforce the notion that unitary convolution enhances the ability of convolutional networks to capture long-range dependencies.


\begin{table}[]
    \centering
    \setlength{\tabcolsep}{3pt}
\begin{small}
{
\textsc{
\begin{tabular}{l@{\hspace{4pt}}l@{\hspace{4pt}}c@{\hspace{4pt}}c@{\hspace{4pt}}c@{\hspace{4pt}}c@{\hspace{4pt}}c@{\hspace{4pt}}}
\toprule
& Method & \multicolumn{1}{c}{Roman-e.} & \multicolumn{1}{c}{Amazon-r.} & \multicolumn{1}{c}{Minesweeper} & \multicolumn{1}{c}{Tolokers} & \multicolumn{1}{c}{Questions}  \\ 
&           & Test AP $\uparrow$ & Test AP $\uparrow$ & ROC AUC $\uparrow$ & ROC AUC $\uparrow$ & ROC AUC $\uparrow$ \\
\midrule
\multirow{4}*{\rotatebox{90}{MP}}
& \hyperlink{cite.kipf2016semi}{GCN} $^\dagger$ & 73.69 $\pm$ 0.74 & 48.70 $\pm$ 0.63 & 89.75 $\pm$ 0.52 & 83.64 $\pm$ 0.67 & 76.09 $\pm$ 1.27 \\
& \hyperlink{cite.hamilton2017inductive}{SAGE} $^\dagger$ & 85.74 $\pm$ 0.67 & 53.63 $\pm$ 0.39 & 93.51 $\pm$ 0.57 & 82.43 $\pm$ 0.44 & 76.44 $\pm$ 0.62 \\
& \hyperlink{cite.velivckovic2017graph}{GAT} $^\dagger$ & 80.87 $\pm$ 0.30 & 49.09 $\pm$ 0.63 & 92.01 $\pm$ 0.68 & 83.70 $\pm$ 0.47 & 77.43 $\pm$ 1.20 \\
& \hyperlink{cite.shi2020masked}{GT} $^\dagger$ & 86.51 $\pm$ 0.73 & 51.17 $\pm$ 0.66 & 91.85 $\pm$ 0.76 & 83.23 $\pm$ 0.64 & 77.95 $\pm$ 0.68 \\
\midrule
\multirow{2}*{\rotatebox{90}{Ours}}
& Unitary GCN & $\mathbf{87.21 \pm 0.76}$ & $\mathbf{55.34 \pm 0.74}$ & $94.27 \pm 0.58$ & $84.83 \pm 0.68$ & $\mathbf{79.21 \pm 0.79}$ \\
& Lie Unitary GCN & 85.50 $\pm$ 0.22 & 52.35 $\pm$ 0.26 & \textbf{96.11 $\pm$ 0.10} & \textbf{85.18 $\pm$ 0.43} & 80.01 $\pm$ 0.43  \\
\bottomrule 
\multicolumn{5}{l}{$^\dagger$ Reported performance taken from \cite{platonov2023critical}.} 
\end{tabular}
}
}
\end{small}
    \caption{Comparison of Unitary GCN with UniConv (\Cref{def:unitary_graph_conv}) and Lie UniConv (\Cref{def:lie_orthogonal_conv}) layers with other GNN architectures on the Heterophilous Graph Datasets.} 
    \label{tab:platonov}
    \vspace{-0.5cm}
\end{table}

\section{Discussion}
In this paper we introduced unitary graph convolutions for enhancing stability in graph neural networks. We provided theoretical and empirical evidence for the effectiveness of our approach. We further introduce an extension to general groups (generalized unitary convolutions), which can be leveraged in group-convolutional architectures to enhance stability.

\paragraph{Limitations}
Perhaps the biggest challenge in working with unitary convolutions is the overhead associated with maintaining unitarity or orthogonality via approximations of the exponential map or diagonalizations in Fourier or spectral bases. For implementing unitary maps, working with complex numbers also requires different initialization and activation functions. We refer the reader to \Cref{app:architectural_considerations,app:exp_map_approx} for methods to alleviate these challenges in practice.
Separately, there may be target functions or problem instances where unitarity or orthogonality may not be appropriate. For example, one can envision node classification tasks where the target function is neither (approximately) invertible nor isometric. In such instances, non-unitary layers will be required to learn the task.
More generally, deciding when to use unitary layers is problem-dependent. In some cases, such as applications with smaller input graphs, simple and more efficient interventions such as adding residual connections or including batch norm will likely suffice for addressing signal propagation problems.

\paragraph{Future Directions} 
In the graph domain, extensions of graph convolution to more advanced methods such as those that better incorporate edge features could widen the range of applications. Exploring hybrid models that combine unitary and non-unitary layers (e.g. global attention mechanisms) could potentially lead to more robust and versatile graph neural networks.
Future work can also improve the efficiency of the parameterizations and implementations of the exponential map (see \Cref{app:exp_map_approx}). In a similar vein, it is likely that approximately unitary/orthogonal layers suffice in many settings to achieve the performance gains we see in our work. Methods that approximately enforce or regularize layers towards unitarity may be of interest in these instances due to their potential for improved efficiency.
In this study, we mainly focused on applications to graph classification and regression tasks; however, the proposed methodology is much more general and could open up a wider range of applications to domains with more general symmetries or different data domains. 
For example, unitary matrices offer provable guarantees to adversarial attacks (see \Cref{cor:robustness}) and testing this robustness in practice on geometric data has yet to be conducted.

\section*{Acknowledgements}
BK and MW were supported by the Harvard Data Science Initiative Competitive Research Fund and NSF award 2112085.

\bibliography{main}

\newcommand{\etalchar}[1]{$^{#1}$}
\begin{thebibliography}{DGGB{\etalchar{+}}23}

\bibitem[ABKL23]{anschuetz2023efficient}
Eric~R Anschuetz, Andreas Bauer, Bobak~T Kiani, and Seth Lloyd.
\newblock Efficient classical algorithms for simulating symmetric quantum systems.
\newblock {\em Quantum}, 7:1189, 2023.

\bibitem[AEL{\etalchar{+}}24]{abbahaddou2024bounding}
Yassine Abbahaddou, Sofiane Ennadir, Johannes~F. Lutzeyer, Michalis Vazirgiannis, and Henrik Bostr{\"o}m.
\newblock Bounding the expected robustness of graph neural networks subject to node feature attacks.
\newblock In {\em The Twelfth International Conference on Learning Representations}, 2024.

\bibitem[AK22]{anschuetz2022quantum}
Eric~R Anschuetz and Bobak~T Kiani.
\newblock Quantum variational algorithms are swamped with traps.
\newblock {\em Nature Communications}, 13(1):7760, 2022.

\bibitem[ALG19]{anil2019sorting}
Cem Anil, James Lucas, and Roger Grosse.
\newblock Sorting out lipschitz function approximation.
\newblock In {\em International Conference on Machine Learning}, pages 291--301. PMLR, 2019.

\bibitem[AMH09]{al2009computing}
Awad~H Al-Mohy and Nicholas~J Higham.
\newblock Computing the fr{\'e}chet derivative of the matrix exponential, with an application to condition number estimation.
\newblock {\em SIAM Journal on Matrix Analysis and Applications}, 30(4):1639--1657, 2009.

\bibitem[AMH10]{al2010new}
Awad~H Al-Mohy and Nicholas~J Higham.
\newblock A new scaling and squaring algorithm for the matrix exponential.
\newblock {\em SIAM Journal on Matrix Analysis and Applications}, 31(3):970--989, 2010.

\bibitem[ASB16]{arjovsky2016unitary}
Martin Arjovsky, Amar Shah, and Yoshua Bengio.
\newblock Unitary evolution recurrent neural networks.
\newblock In {\em International Conference on Machine Learning}, pages 1120--1128, 2016.

\bibitem[BDGC{\etalchar{+}}22]{bodnar2022neural}
Cristian Bodnar, Francesco Di~Giovanni, Benjamin Chamberlain, Pietro Lio, and Michael Bronstein.
\newblock Neural sheaf diffusion: A topological perspective on heterophily and oversmoothing in gnns.
\newblock {\em Advances in Neural Information Processing Systems}, 35:18527--18541, 2022.

\bibitem[BL17]{bresson2017residual}
Xavier Bresson and Thomas Laurent.
\newblock Residual gated graph convnets.
\newblock {\em arXiv preprint arXiv:1711.07553}, 2017.

\bibitem[BQL21]{bassey2021survey}
Joshua Bassey, Lijun Qian, and Xianfang Li.
\newblock A survey of complex-valued neural networks.
\newblock {\em arXiv preprint arXiv:2101.12249}, 2021.

\bibitem[BSC{\etalchar{+}}24]{batatia2024equivariant}
Ilyes Batatia, Lars~Leon Schaaf, Gabor Csanyi, Christoph Ortner, and Felix~Andreas Faber.
\newblock Equivariant matrix function neural networks.
\newblock In {\em The Twelfth International Conference on Learning Representations}, 2024.

\bibitem[BSF94]{bengio1994learning}
Yoshua Bengio, Patrice Simard, and Paolo Frasconi.
\newblock Learning long-term dependencies with gradient descent is difficult.
\newblock {\em IEEE transactions on neural networks}, 5(2):157--166, 1994.

\bibitem[BVB21]{bietti}
Alberto Bietti, Luca Venturi, and Joan Bruna.
\newblock On the sample complexity of learning under geometric stability.
\newblock In M.~Ranzato, A.~Beygelzimer, Y.~Dauphin, P.S. Liang, and J.~Wortman Vaughan, editors, {\em Advances in Neural Information Processing Systems}, volume~34, pages 18673--18684. Curran Associates, Inc., 2021.

\bibitem[CGKW18]{cohen2018spherical}
Taco~S Cohen, Mario Geiger, Jonas K{\"o}hler, and Max Welling.
\newblock Spherical cnns.
\newblock {\em arXiv preprint arXiv:1801.10130}, 2018.

\bibitem[Chu97]{chung1997spectral}
Fan~RK Chung.
\newblock {\em Spectral graph theory}, volume~92.
\newblock American Mathematical Soc., 1997.

\bibitem[CNDP{\etalchar{+}}21]{castelazo2021quantum}
Grecia Castelazo, Quynh~T Nguyen, Giacomo De~Palma, Dirk Englund, Seth Lloyd, and Bobak~T Kiani.
\newblock Quantum algorithms for group convolution, cross-correlation, and equivariant transformations.
\newblock {\em arXiv preprint arXiv:2109.11330}, 2021.

\bibitem[CST{\etalchar{+}}21]{childs2021theory}
Andrew~M Childs, Yuan Su, Minh~C Tran, Nathan Wiebe, and Shuchen Zhu.
\newblock Theory of trotter error with commutator scaling.
\newblock {\em Physical Review X}, 11(1):011020, 2021.

\bibitem[CW16]{cohen2016group}
Taco Cohen and Max Welling.
\newblock Group equivariant convolutional networks.
\newblock In {\em International conference on machine learning}, pages 2990--2999. PMLR, 2016.

\bibitem[CW20]{cai2020note}
Chen Cai and Yusu Wang.
\newblock A note on over-smoothing for graph neural networks.
\newblock {\em arXiv preprint arXiv:2006.13318}, 2020.

\bibitem[DGGB{\etalchar{+}}23]{di2023over}
Francesco Di~Giovanni, Lorenzo Giusti, Federico Barbero, Giulia Luise, Pietro Lio, and Michael~M Bronstein.
\newblock On over-squashing in message passing neural networks: The impact of width, depth, and topology.
\newblock In {\em International Conference on Machine Learning}, pages 7865--7885. PMLR, 2023.

\bibitem[DRG{\etalchar{+}}22]{dwivedi2022long}
Vijay~Prakash Dwivedi, Ladislav Ramp{\'a}{\v{s}}ek, Mikhail Galkin, Ali Parviz, Guy Wolf, Anh~Tuan Luu, and Dominique Beaini.
\newblock Long range graph benchmark.
\newblock In {\em Thirty-sixth Conference on Neural Information Processing Systems Datasets and Benchmarks Track}, 2022.

\bibitem[ESM23]{esteves2023scaling}
Carlos Esteves, Jean-Jacques Slotine, and Ameesh Makadia.
\newblock Scaling spherical cnns.
\newblock {\em arXiv preprint arXiv:2306.05420}, 2023.

\bibitem[FH13]{fulton2013representation}
William Fulton and Joe Harris.
\newblock {\em Representation theory: a first course}, volume 129.
\newblock Springer Science \& Business Media, 2013.

\bibitem[FL19]{Fey/Lenssen/2019}
Matthias Fey and Jan~E. Lenssen.
\newblock Fast graph representation learning with {PyTorch Geometric}.
\newblock In {\em ICLR Workshop on Representation Learning on Graphs and Manifolds}, 2019.

\bibitem[FSIW20]{finzi2020generalizing}
Marc Finzi, Samuel Stanton, Pavel Izmailov, and Andrew~Gordon Wilson.
\newblock Generalizing convolutional neural networks for equivariance to lie groups on arbitrary continuous data.
\newblock In {\em International Conference on Machine Learning}, pages 3165--3176. PMLR, 2020.

\bibitem[FW24]{fesser24a}
Lukas Fesser and Melanie Weber.
\newblock Mitigating over-smoothing and over-squashing using augmentations of {F}orman-{R}icci curvature.
\newblock In {\em Proceedings of the Second Learning on Graphs Conference}, volume 231 of {\em Proceedings of Machine Learning Research}, pages 19:1--19:28. PMLR, 27--30 Nov 2024.

\bibitem[GBC16]{goodfellow2016deep}
Ian Goodfellow, Yoshua Bengio, and Aaron Courville.
\newblock {\em Deep learning}.
\newblock MIT press, 2016.

\bibitem[GBH18]{ganea2018hyperbolic}
Octavian Ganea, Gary B{\'e}cigneul, and Thomas Hofmann.
\newblock Hyperbolic neural networks.
\newblock {\em Advances in neural information processing systems}, 31, 2018.

\bibitem[GDBDG23]{pmlr-v202-gutteridge23a}
Benjamin Gutteridge, Xiaowen Dong, Michael~M. Bronstein, and Francesco Di~Giovanni.
\newblock {DR}ew: Dynamically rewired message passing with delay.
\newblock In Andreas Krause, Emma Brunskill, Kyunghyun Cho, Barbara Engelhardt, Sivan Sabato, and Jonathan Scarlett, editors, {\em Proceedings of the 40th International Conference on Machine Learning}, volume 202 of {\em Proceedings of Machine Learning Research}, pages 12252--12267. PMLR, 23--29 Jul 2023.

\bibitem[GRK{\etalchar{+}}21]{gligorijevic2021structure}
Gligorijevi{\'c}, Renfrew, Kosciolek, Leman Koehler, Berenberg, Vatanen, Chandler, Taylor, Fisk, Vlamakis, et~al.
\newblock Structure-based protein function prediction using graph convolutional networks.
\newblock {\em Nature communications}, 12(1):3168, 2021.

\bibitem[GX15]{guo2015approximate}
Qinghua Guo and Jiangtao Xi.
\newblock Approximate message passing with unitary transformation.
\newblock {\em arXiv preprint arXiv:1504.04799}, 2015.

\bibitem[GZH{\etalchar{+}}22]{guo2022orthogonal}
Kai Guo, Kaixiong Zhou, Xia Hu, Yu~Li, Yi~Chang, and Xin Wang.
\newblock Orthogonal graph neural networks.
\newblock In {\em Proceedings of the AAAI Conference on Artificial Intelligence}, volume~36, pages 3996--4004, 2022.

\bibitem[Hal15]{hall2015lie}
Brian Hall.
\newblock {\em Lie groups, Lie algebras, and representations: an elementary introduction}, volume 222.
\newblock Springer, 2015.

\bibitem[HHL{\etalchar{+}}23]{pmlr-v202-he23a}
Xiaoxin He, Bryan Hooi, Thomas Laurent, Adam Perold, Yann Lecun, and Xavier Bresson.
\newblock A generalization of {V}i{T}/{MLP}-mixer to graphs.
\newblock In Andreas Krause, Emma Brunskill, Kyunghyun Cho, Barbara Engelhardt, Sivan Sabato, and Jonathan Scarlett, editors, {\em Proceedings of the 40th International Conference on Machine Learning}, volume 202 of {\em Proceedings of Machine Learning Research}, pages 12724--12745. PMLR, 23--29 Jul 2023.

\bibitem[Hig09]{higham2009scaling}
Nicholas~J Higham.
\newblock The scaling and squaring method for the matrix exponential revisited.
\newblock {\em SIAM review}, 51(4):747--764, 2009.

\bibitem[Hoc91]{hochreiter1991untersuchungen}
Sepp Hochreiter.
\newblock Untersuchungen zu dynamischen neuronalen netzen.
\newblock {\em Diploma, Technische Universit{\"a}t M{\"u}nchen}, 91(1), 1991.

\bibitem[HS97]{hochreiter1997long}
Sepp Hochreiter and J{\"u}rgen Schmidhuber.
\newblock Long short-term memory.
\newblock {\em Neural computation}, 9(8):1735--1780, 1997.

\bibitem[HSL16]{henaff2016recurrent}
Mikael Henaff, Arthur Szlam, and Yann LeCun.
\newblock Recurrent orthogonal networks and long-memory tasks.
\newblock In {\em International Conference on Machine Learning}, pages 2034--2042. PMLR, 2016.

\bibitem[HSTW20]{hoogeboom2020convolution}
Emiel Hoogeboom, Victor~Garcia Satorras, Jakub~M Tomczak, and Max Welling.
\newblock The convolution exponential and generalized sylvester flows.
\newblock {\em arXiv preprint arXiv:2006.01910}, 2020.

\bibitem[HWY18]{helfrich2018orthogonal}
Kyle Helfrich, Devin Willmott, and Qiang Ye.
\newblock Orthogonal recurrent neural networks with scaled cayley transform.
\newblock In {\em International Conference on Machine Learning}, pages 1969--1978. PMLR, 2018.

\bibitem[HYL17]{hamilton2017inductive}
Will Hamilton, Zhitao Ying, and Jure Leskovec.
\newblock Inductive representation learning on large graphs.
\newblock {\em Advances in neural information processing systems}, 30, 2017.

\bibitem[JD15]{jiang2015framework}
Bo~Jiang and Yu-Hong Dai.
\newblock A framework of constraint preserving update schemes for optimization on stiefel manifold.
\newblock {\em Mathematical Programming}, 153(2):535--575, 2015.

\bibitem[JSD{\etalchar{+}}17]{jing2017tunable}
Li~Jing, Yichen Shen, Tena Dubcek, John Peurifoy, Scott Skirlo, Yann LeCun, Max Tegmark, and Marin Solja{\v{c}}i{\'c}.
\newblock Tunable efficient unitary neural networks (eunn) and their application to rnns.
\newblock In {\em Proceedings of the 34th International Conference on Machine Learning-Volume 70}, pages 1733--1741. JMLR. org, 2017.

\bibitem[KB14]{kingma2014adam}
Diederik~P Kingma and Jimmy Ba.
\newblock Adam: A method for stochastic optimization.
\newblock {\em arXiv preprint arXiv:1412.6980}, 2014.

\bibitem[KBLL22]{kiani2022projunn}
Bobak Kiani, Randall Balestriero, Yann LeCun, and Seth Lloyd.
\newblock projunn: efficient method for training deep networks with unitary matrices.
\newblock {\em Advances in Neural Information Processing Systems}, 35:14448--14463, 2022.

\bibitem[Kel75]{keller1975closest}
Joseph~B Keller.
\newblock Closest unitary, orthogonal and hermitian operators to a given operator.
\newblock {\em Mathematics Magazine}, 48(4):192--197, 1975.

\bibitem[Kem03]{kempe2003quantum}
Julia Kempe.
\newblock Quantum random walks: an introductory overview.
\newblock {\em Contemporary Physics}, 44(4):307--327, 2003.

\bibitem[Ker22]{keriven2022not}
Nicolas Keriven.
\newblock Not too little, not too much: a theoretical analysis of graph (over) smoothing.
\newblock {\em Advances in Neural Information Processing Systems}, 35:2268--2281, 2022.

\bibitem[KJ08]{kirillov2008introduction}
Alexander Kirillov~Jr.
\newblock {\em An introduction to Lie groups and Lie algebras}.
\newblock Number 113. Cambridge University Press, 2008.

\bibitem[KLL{\etalchar{+}}24]{kiani2024hardness}
Bobak~T Kiani, Thien Le, Hannah Lawrence, Stefanie Jegelka, and Melanie Weber.
\newblock On the hardness of learning under symmetries.
\newblock In {\em International Conference on Learning Representations}, 2024.

\bibitem[KT18]{kondor2018generalization}
Risi Kondor and Shubhendu Trivedi.
\newblock On the generalization of equivariance and convolution in neural networks to the action of compact groups.
\newblock In {\em International conference on machine learning}, pages 2747--2755. PMLR, 2018.

\bibitem[KW16]{kipf2016semi}
Thomas~N Kipf and Max Welling.
\newblock Semi-supervised classification with graph convolutional networks.
\newblock {\em arXiv preprint arXiv:1609.02907}, 2016.

\bibitem[LCMR19]{lezcano2019cheap}
Mario Lezcano-Casado and David Mart{\i}nez-Rubio.
\newblock Cheap orthogonal constraints in neural networks: A simple parametrization of the orthogonal and unitary group.
\newblock In {\em International Conference on Machine Learning}, pages 3794--3803. PMLR, 2019.

\bibitem[LFT20]{li2020efficient}
Jun Li, Li~Fuxin, and Sinisa Todorovic.
\newblock Efficient riemannian optimization on the stiefel manifold via the cayley transform.
\newblock {\em arXiv preprint arXiv:2002.01113}, 2020.

\bibitem[LGJ{\etalchar{+}}21]{luo2021unitary}
Man Luo, Qinghua Guo, Ming Jin, Yonina~C Eldar, Defeng Huang, and Xiangming Meng.
\newblock Unitary approximate message passing for sparse bayesian learning.
\newblock {\em IEEE transactions on signal processing}, 69:6023--6039, 2021.

\bibitem[LHA{\etalchar{+}}19]{li2019preventing}
Qiyang Li, Saminul Haque, Cem Anil, James Lucas, Roger~B Grosse, and J{\"o}rn-Henrik Jacobsen.
\newblock Preventing gradient attenuation in lipschitz constrained convolutional networks.
\newblock {\em Advances in neural information processing systems}, 32:15390--15402, 2019.

\bibitem[LJH15]{le2015simple}
Quoc~V Le, Navdeep Jaitly, and Geoffrey~E Hinton.
\newblock A simple way to initialize recurrent networks of rectified linear units.
\newblock {\em arXiv preprint arXiv:1504.00941}, 2015.

\bibitem[LJW{\etalchar{+}}19]{li2019orthogonal}
Shuai Li, Kui Jia, Yuxin Wen, Tongliang Liu, and Dacheng Tao.
\newblock Orthogonal deep neural networks.
\newblock {\em IEEE transactions on pattern analysis and machine intelligence}, 43(4):1352--1368, 2019.

\bibitem[Llo96]{lloyd1996universal}
Seth Lloyd.
\newblock Universal quantum simulators.
\newblock {\em Science}, 273(5278):1073--1078, 1996.

\bibitem[MBB24]{mustafa2024gats}
Nimrah Mustafa, Aleksandar Bojchevski, and Rebekka Burkholz.
\newblock Are gats out of balance?
\newblock {\em Advances in Neural Information Processing Systems}, 36, 2024.

\bibitem[MGL{\etalchar{+}}23]{mialon2023self}
Gr{\'e}goire Mialon, Quentin Garrido, Hannah Lawrence, Danyal Rehman, Yann LeCun, and Bobak Kiani.
\newblock Self-supervised learning with lie symmetries for partial differential equations.
\newblock {\em Advances in Neural Information Processing Systems}, 36:28973--29004, 2023.

\bibitem[MHRB17]{mhammedi2017efficient}
Zakaria Mhammedi, Andrew Hellicar, Ashfaqur Rahman, and James Bailey.
\newblock Efficient orthogonal parametrisation of recurrent neural networks using householder reflections.
\newblock In {\em International Conference on Machine Learning}, pages 2401--2409. PMLR, 2017.

\bibitem[MKB{\etalchar{+}}20]{Morris+2020}
Christopher Morris, Nils~M. Kriege, Franka Bause, Kristian Kersting, Petra Mutzel, and Marion Neumann.
\newblock Tudataset: A collection of benchmark datasets for learning with graphs.
\newblock In {\em ICML 2020 Workshop on Graph Representation Learning and Beyond (GRL+ 2020)}, 2020.

\bibitem[MLL{\etalchar{+}}23]{ma2023graph}
Liheng Ma, Chen Lin, Derek Lim, Adriana Romero-Soriano, Puneet~K Dokania, Mark Coates, Philip Torr, and Ser-Nam Lim.
\newblock Graph inductive biases in transformers without message passing.
\newblock In {\em International Conference on Machine Learning}, pages 23321--23337. PMLR, 2023.

\bibitem[MMM21]{mei}
Song Mei, Theodor Misiakiewicz, and Andrea Montanari.
\newblock Learning with invariances in random features and kernel models.
\newblock In Mikhail Belkin and Samory Kpotufe, editors, {\em Proceedings of Thirty Fourth Conference on Learning Theory}, volume 134 of {\em Proceedings of Machine Learning Research}, pages 3351--3418. PMLR, 15--19 Aug 2021.

\bibitem[MMS{\etalchar{+}}17]{madry2017towards}
Aleksander Madry, Aleksandar Makelov, Ludwig Schmidt, Dimitris Tsipras, and Adrian Vladu.
\newblock Towards deep learning models resistant to adversarial attacks.
\newblock {\em arXiv preprint arXiv:1706.06083}, 2017.

\bibitem[MP17]{ma2017orthogonal}
Junjie Ma and Li~Ping.
\newblock Orthogonal amp.
\newblock {\em IEEE Access}, 5:2020--2033, 2017.

\bibitem[MPBK24]{maskey2024fractional}
Sohir Maskey, Raffaele Paolino, Aras Bacho, and Gitta Kutyniok.
\newblock A fractional graph laplacian approach to oversmoothing.
\newblock {\em Advances in Neural Information Processing Systems}, 36, 2024.

\bibitem[MQ02]{mclachlan2002splitting}
Robert~I McLachlan and G~Reinout~W Quispel.
\newblock Splitting methods.
\newblock {\em Acta Numerica}, 11:341--434, 2002.

\bibitem[NA05]{nishimori2005learning}
Yasunori Nishimori and Shotaro Akaho.
\newblock Learning algorithms utilizing quasi-geodesic flows on the stiefel manifold.
\newblock {\em Neurocomputing}, 67:106--135, 2005.

\bibitem[NHN{\etalchar{+}}23]{nguyen2023revisiting}
Khang Nguyen, Nong~Minh Hieu, Vinh~Duc Nguyen, Nhat Ho, Stanley Osher, and Tan~Minh Nguyen.
\newblock Revisiting over-smoothing and over-squashing using ollivier-ricci curvature.
\newblock In {\em International Conference on Machine Learning}, pages 25956--25979. PMLR, 2023.

\bibitem[NSB{\etalchar{+}}22]{nguyen2022theory}
Quynh~T Nguyen, Louis Schatzki, Paolo Braccia, Michael Ragone, Patrick~J Coles, Frederic Sauvage, Martin Larocca, and Marco Cerezo.
\newblock Theory for equivariant quantum neural networks.
\newblock {\em arXiv preprint arXiv:2210.08566}, 2022.

\bibitem[Pet06]{petersen2006riemannian}
Peter Petersen.
\newblock {\em Riemannian geometry}, volume 171.
\newblock Springer, 2006.

\bibitem[PGC{\etalchar{+}}17]{paszke2017automatic}
Adam Paszke, Sam Gross, Soumith Chintala, Gregory Chanan, Edward Yang, Zachary DeVito, Zeming Lin, Alban Desmaison, Luca Antiga, and Adam Lerer.
\newblock Automatic differentiation in pytorch.
\newblock 2017.

\bibitem[PKD{\etalchar{+}}23]{platonov2023critical}
Oleg Platonov, Denis Kuznedelev, Michael Diskin, Artem Babenko, and Liudmila Prokhorenkova.
\newblock A critical look at the evaluation of gnns under heterophily: Are we really making progress?
\newblock {\em arXiv preprint arXiv:2302.11640}, 2023.

\bibitem[PMB13]{pascanu2013difficulty}
Razvan Pascanu, Tomas Mikolov, and Yoshua Bengio.
\newblock On the difficulty of training recurrent neural networks.
\newblock In {\em International conference on machine learning}, pages 1310--1318. Pmlr, 2013.

\bibitem[PSG18]{pennington2018emergence}
Jeffrey Pennington, Samuel Schoenholz, and Surya Ganguli.
\newblock The emergence of spectral universality in deep networks.
\newblock In {\em International Conference on Artificial Intelligence and Statistics}, pages 1924--1932. PMLR, 2018.

\bibitem[PTPV17]{pham2017column}
Trang Pham, Truyen Tran, Dinh Phung, and Svetha Venkatesh.
\newblock Column networks for collective classification.
\newblock In {\em Proceedings of the AAAI conference on artificial intelligence}, volume~31, 2017.

\bibitem[QBY24]{qiu2024graph}
Haiquan Qiu, Yatao Bian, and Quanming Yao.
\newblock Graph unitary message passing.
\newblock {\em arXiv preprint arXiv:2403.11199}, 2024.

\bibitem[RBM23]{rusch2023survey}
T~Konstantin Rusch, Michael~M Bronstein, and Siddhartha Mishra.
\newblock A survey on oversmoothing in graph neural networks.
\newblock {\em arXiv preprint arXiv:2303.10993}, 2023.

\bibitem[RCR{\etalchar{+}}22]{rusch2022graph}
T~Konstantin Rusch, Ben Chamberlain, James Rowbottom, Siddhartha Mishra, and Michael Bronstein.
\newblock Graph-coupled oscillator networks.
\newblock In {\em International Conference on Machine Learning}, pages 18888--18909. PMLR, 2022.

\bibitem[RGD{\etalchar{+}}22]{rampavsek2022recipe}
Ladislav Ramp{\'a}{\v{s}}ek, Michael Galkin, Vijay~Prakash Dwivedi, Anh~Tuan Luu, Guy Wolf, and Dominique Beaini.
\newblock Recipe for a general, powerful, scalable graph transformer.
\newblock {\em Advances in Neural Information Processing Systems}, 35:14501--14515, 2022.

\bibitem[S{\etalchar{+}}77]{serre1977linear}
Jean-Pierre Serre et~al.
\newblock {\em Linear representations of finite groups}, volume~42.
\newblock Springer, 1977.

\bibitem[SBV20]{shlomi2020graph}
Shlomi, Battaglia, and Vlimant.
\newblock Graph neural networks in particle physics.
\newblock {\em Machine Learning: Science and Technology}, 2(2):021001, 2020.

\bibitem[SCY{\etalchar{+}}23]{skolik2023equivariant}
Andrea Skolik, Michele Cattelan, Sheir Yarkoni, Thomas B{\"a}ck, and Vedran Dunjko.
\newblock Equivariant quantum circuits for learning on weighted graphs.
\newblock {\em npj Quantum Information}, 9(1):47, 2023.

\bibitem[SF21]{singla2021skew}
Sahil Singla and Soheil Feizi.
\newblock Skew orthogonal convolutions.
\newblock {\em arXiv preprint arXiv:2105.11417}, 2021.

\bibitem[SGL18]{sedghi2018singular}
Hanie Sedghi, Vineet Gupta, and Philip~M Long.
\newblock The singular values of convolutional layers.
\newblock {\em arXiv preprint arXiv:1805.10408}, 2018.

\bibitem[SHF{\etalchar{+}}20]{shi2020masked}
Yunsheng Shi, Zhengjie Huang, Shikun Feng, Hui Zhong, Wenjin Wang, and Yu~Sun.
\newblock Masked label prediction: Unified message passing model for semi-supervised classification.
\newblock {\em arXiv preprint arXiv:2009.03509}, 2020.

\bibitem[SLN{\etalchar{+}}24]{schatzki2024theoretical}
Louis Schatzki, Martin Larocca, Quynh~T Nguyen, Frederic Sauvage, and Marco Cerezo.
\newblock Theoretical guarantees for permutation-equivariant quantum neural networks.
\newblock {\em npj Quantum Information}, 10(1):12, 2024.

\bibitem[SMG13]{saxe2013exact}
Andrew~M Saxe, James~L McClelland, and Surya Ganguli.
\newblock Exact solutions to the nonlinear dynamics of learning in deep linear neural networks.
\newblock {\em arXiv preprint arXiv:1312.6120}, 2013.

\bibitem[Str68]{strang1968construction}
Gilbert Strang.
\newblock On the construction and comparison of difference schemes.
\newblock {\em SIAM journal on numerical analysis}, 5(3):506--517, 1968.

\bibitem[SVV{\etalchar{+}}23]{shirzad2023exphormer}
Hamed Shirzad, Ameya Velingker, Balaji Venkatachalam, Danica~J Sutherland, and Ali~Kemal Sinop.
\newblock Exphormer: Sparse transformers for graphs.
\newblock In {\em International Conference on Machine Learning}, pages 31613--31632. PMLR, 2023.

\bibitem[TK21]{trockman2021orthogonalizing}
Asher Trockman and J~Zico Kolter.
\newblock Orthogonalizing convolutional layers with the cayley transform.
\newblock {\em arXiv preprint arXiv:2104.07167}, 2021.

\bibitem[Tro59]{trotter1959product}
Hale~F Trotter.
\newblock On the product of semi-groups of operators.
\newblock {\em Proceedings of the American Mathematical Society}, 10(4):545--551, 1959.

\bibitem[TRRG23]{tonshoff2023did}
Jan T{\"o}nshoff, Martin Ritzert, Eran Rosenbluth, and Martin Grohe.
\newblock Where did the gap go? reassessing the long-range graph benchmark.
\newblock {\em arXiv preprint arXiv:2309.00367}, 2023.

\bibitem[TRWG21]{tonshoff2021walking}
Jan T{\"o}nshoff, Martin Ritzert, Hinrikus Wolf, and Martin Grohe.
\newblock Walking out of the weisfeiler leman hierarchy: Graph learning beyond message passing.
\newblock {\em arXiv preprint arXiv:2102.08786}, 2021.

\bibitem[TRWG23]{tonshoff2023walking}
Jan T{\"o}nshoff, Martin Ritzert, Hinrikus Wolf, and Martin Grohe.
\newblock Walking out of the weisfeiler leman hierarchy: Graph learning beyond message passing.
\newblock {\em Transactions on Machine Learning Research}, 2023.

\bibitem[VCC{\etalchar{+}}17]{velivckovic2017graph}
Petar Veli{\v{c}}kovi{\'c}, Guillem Cucurull, Arantxa Casanova, Adriana Romero, Pietro Lio, and Yoshua Bengio.
\newblock Graph attention networks.
\newblock {\em arXiv preprint arXiv:1710.10903}, 2017.

\bibitem[WPH{\etalchar{+}}16]{wisdom2016full}
Scott Wisdom, Thomas Powers, John~R Hershey, Jonathan~Le Roux, and Les Atlas.
\newblock Full-capacity unitary recurrent neural networks.
\newblock {\em arXiv preprint arXiv:1611.00035}, 2016.

\bibitem[WSZ{\etalchar{+}}22]{wu2022graph}
Wu, Sun, Zhang, Xie, and Cui.
\newblock Graph neural networks in recommender systems: a survey.
\newblock {\em ACM Computing Surveys}, 55(5):1--37, 2022.

\bibitem[WWY20]{wang2020incorporating}
Rui Wang, Robin Walters, and Rose Yu.
\newblock Incorporating symmetry into deep dynamics models for improved generalization.
\newblock {\em arXiv preprint arXiv:2002.03061}, 2020.

\bibitem[WZR{\etalchar{+}}20]{wzrmk20}
Melanie Weber, Manzil Zaheer, Ankit~Singh Rawat, Aditya~K Menon, and Sanjiv Kumar.
\newblock Robust large-margin learning in hyperbolic space.
\newblock {\em Advances in Neural Information Processing Systems}, 33:17863--17873, 2020.

\bibitem[XBSD{\etalchar{+}}18]{xiao2018dynamical}
Lechao Xiao, Yasaman Bahri, Jascha Sohl-Dickstein, Samuel Schoenholz, and Jeffrey Pennington.
\newblock Dynamical isometry and a mean field theory of cnns: How to train 10,000-layer vanilla convolutional neural networks.
\newblock In {\em International Conference on Machine Learning}, pages 5393--5402. PMLR, 2018.

\bibitem[XHLJ18]{xu2018powerful}
Keyulu Xu, Weihua Hu, Jure Leskovec, and Stefanie Jegelka.
\newblock How powerful are graph neural networks?
\newblock {\em arXiv preprint arXiv:1810.00826}, 2018.

\bibitem[XLT{\etalchar{+}}18]{xu2018representation}
Keyulu Xu, Chengtao Li, Yonglong Tian, Tomohiro Sonobe, Ken-ichi Kawarabayashi, and Stefanie Jegelka.
\newblock Representation learning on graphs with jumping knowledge networks.
\newblock In {\em International conference on machine learning}, pages 5453--5462. PMLR, 2018.

\bibitem[YYL20]{you2020design}
Jiaxuan You, Zhitao Ying, and Jure Leskovec.
\newblock Design space for graph neural networks.
\newblock {\em Advances in Neural Information Processing Systems}, 33:17009--17021, 2020.

\bibitem[ZAL18]{zitnik}
Zitnik, Agrawal, and Leskovec.
\newblock Modeling polypharmacy side effects with graph convolutional networks.
\newblock {\em Bioinformatics}, 34(13):i457--i466, 2018.

\bibitem[ZK20]{zhu2020simple}
Hao Zhu and Piotr Koniusz.
\newblock Simple spectral graph convolution.
\newblock In {\em International conference on learning representations}, 2020.

\end{thebibliography}
\bibliographystyle{alpha}

\newpage

\appendix

\section*{Table of Contents}
\startcontents[sections]
\printcontents[sections]{l}{1}{\setcounter{tocdepth}{2}}

\section{Extended related works}
\label{app:extended_related_work}

\subsection{Further related literature}
\paragraph{Unitary RNNs}
Unitary neural networks were initially developed to tackle the challenge of vanishing and exploding gradients encountered in recurrent neural networks (RNNs) processing lengthy sequences of data. Aiming for more efficient information learning compared to established non-unitary architectures like the long-short term memory unit (LSTM) \cite{hochreiter1997long}, early approaches ensured unitarity through a sequence of parameterized unitary transformations \cite{arjovsky2016unitary,mhammedi2017efficient,jing2017tunable}. Other methods like the unitary RNN (uRNN) \cite{wisdom2016full}, the Cayley parameterization (scoRNN) \cite{helfrich2018orthogonal}, and exponential RNN (expRNN) \cite{lezcano2019cheap} parameterized the entire unitary space, maintaining unitarity via Cayley transformations or parameterizing the Lie algebra of the unitary group and performing the exponential map. To eliminate the reliance on the matrix inversion or matrix exponential in these prior algorithms which are computationally expensive in higher dimensions, \cite{kiani2022projunn} showed how to fully parameterize unitary operations more efficiently when applying gradient updates in a low rank subspace. 

\paragraph{Group convolutional neural networks}
For convolutional neural networks acting on data in lattices or grids (e.g. images), various algorithms have proposed orthogonal or unitary versions of linear convolutions over the cyclic group \cite{li2019preventing,singla2021skew,trockman2021orthogonalizing,kiani2022projunn}. \cite{xiao2018dynamical} study the task of initializing convolution layers to be an isometry and thereby are able to train very deep CNNs with thousands of layers without the need for residual connections or batch norm. \cite{sedghi2018singular} project convolutions onto an operator-norm ball to ensure the norm of the convolution is within a given range. \cite{li2019preventing} introduced a block convolutional orthogonal parameterization (BCOP) parameterizing a subspace of orthogonal convolution operations. \cite{singla2021skew} implement orthogonal convolutions by parameterizing the Lie algebra of the orthogonal group and approximating the exponential map. Their approach is a special case of the instances we discuss in our work. \cite{trockman2021orthogonalizing} and \cite{kiani2022projunn} perform convolution in the Fourier domain where convolution is block diagonalized and unitarity can be enforced across each of these blocks. Generally, unitary convolutional architectures do not perform as well as their vanilla counterparts in terms of accuracy on large image datasets such as CIFAR or ImageNet \cite{trockman2021orthogonalizing,singla2021skew,li2019orthogonal}. Nonetheless, enforcing unitarity here can provide other benefits such as improved robustness to adversarial attacks or added stability with many layers \cite{trockman2021orthogonalizing,kiani2022projunn,singla2021skew,xiao2018dynamical}. 
Various architectures for more general geometric domains have been proposed~\cite{kondor2018generalization,cohen2018spherical,cohen2016group,ganea2018hyperbolic,wzrmk20,finzi2020generalizing}, including generalized group convolutional neural networks considered here. To the best of our knowledge, no explicit extensions of unitary convolutions and related stability aspects have been considered in these settings.

\paragraph{GNNs and oversmoothing}
Our GNN architectures develop on seminal work proposing variants of graph convolution architectures \cite{kipf2016semi,bresson2017residual}. Graph convolution is one of various types of trainable layers that act on graphs and are equivariant to the permutation group; among those we compare to in our experiments are \cite{kipf2016semi,bresson2017residual,xu2018powerful,tonshoff2023did,pmlr-v202-he23a,pmlr-v202-gutteridge23a,rampavsek2022recipe,ma2023graph,shirzad2023exphormer,tonshoff2023walking,hamilton2017inductive,velivckovic2017graph,shi2020masked}. Recently, some work has proposed variants of unitary/orthogonal graph convolution which we discuss in more detail in the next section (\Cref{app:other_unitary_comparison}). Compared to our method, these either only enforce unitarity in part of the transformation or are expensive to compute. For example, \cite{guo2022orthogonal,abbahaddou2024bounding} propose variants of graph convolution which enforce orthogonality in the feature transformation but not in the message passing update itself. \cite{qiu2024graph} propose a graph unitary message passing algorithm that is in general close to an isometry, but scales poorly with the graph size (see \Cref{app:other_unitary_comparison}). Some recent work has also proposed performing message passing in the complex domain. \cite{maskey2024fractional} show a continuous time message passing update that can be made unitary as discussed in \Cref{app:other_unitary_comparison}. \cite{batatia2024equivariant} propose an equivariant matrix function graph neural network layer that performs a global update by taking a trainable pole expansion of a matrix function of the adjacency matrix. This layer is effective at incorporating global information, but requires matrix inversion operations that can scale poorly with dimension. \cite{mustafa2024gats} give a so-called ``balanced orthogonal" initialization for message passing networks using graph attention (GAT). The weight matrix parameters in the GAT layer are initialized as some orthogonal matrix which improves stability for GNNs of the given form with many layers. Nonetheless, the layer itself does not implement an isometry due to the presence of the attention mechanism in the GAT layer. 

Various architectures have been proposed to avoid oversmoothing and incorporate nonlocal graph information more generally. 
This includes approaches that perform small perturbations of the input graph (rewiring~\cite{rusch2023survey,fesser24a}), as well as
architectural interventions, such as skip connections~\cite{pham2017column,hamilton2017inductive,xu2018representation}, which can improve stability in homophilous settings.

\paragraph{Other settings}
In the context of Bayesian statistics and graphical algorithms, there are variants of the approximate message passing algorithm which leverage properties of unitarity or orthogonality in the message passing \cite{luo2021unitary,guo2015approximate,ma2017orthogonal}. Unitary equivariant operators are also used in the context of quantum computation and quantum variational algorithms \cite{nguyen2022theory,castelazo2021quantum,skolik2023equivariant,schatzki2024theoretical}. These papers generally study a very different context to that of classical machine learning algorithms. Furthermore, it is unclear whether practical instances of these algorithms offer improvements in comparison to classical machine learning algorithms \cite{anschuetz2022quantum,anschuetz2023efficient}. Separate from quantum machine learning, the unitary graph convolution used here is partly inspired by quantum walks which feature similar unitarity properties \cite{kempe2003quantum}.

\subsection{Comparison with other proposals for unitary message passing} \label{app:other_unitary_comparison}

We compare here previous GNN architectures which have proposed variants of complex-valued or isometric message passing. To facilitate this comparison, first let us recall some standard notation and implementations. Given an (possibly normalized) adjacency matrix $\mA \in \mathbb{R}^{n \times n}$ acting on $n$ nodes and input features $\tX \in \mathbb{R}^{n \times d}$ of dimension $d$, a basic linear message passing layer $f:\mathbb{R}^{n \times d} \to \mathbb{R}^{n \times d'}$ takes the form
\begin{equation} \label{eq:message_passing_form_for_comparison}
    f_{\mW}(\tX) = \sigma( \mA \tX \mW ),
\end{equation}
where $\sigma:\mathbb{R} \to \mathbb{R}$ is a pointwise nonlinearity and $\mW \in \mathbb{R}^{d \times d'}$ is a parameterized weight matrix transforming the node features. There are variations of the above which normalize the adjacency matrix or add residual connections, but for sake of comparison, we will study the simplified form above. In contrast, the unitary message passing layer we propose takes the form
\begin{equation} 
    f^{\rm uni}_{\mW}(\tX) = \sigma( \exp(it\mA) \tX \mW ),
\end{equation}
where $\tX$ and $\mW$ are now complex-valued and $t \in \mathbb{R}$ is a parameter controlling the strength of message passing.

\cite{guo2022orthogonal} propose a version of an orthogonal GNN which enforces either exactly or approximately that the matrix $\mW$ in \Cref{eq:message_passing_form_for_comparison} is orthogonal, i.e.  
\begin{equation} 
    f_{\mW}(\tX) = \sigma( \mA \tX \mW ), \text{ where } \mW^\intercal \mW = \mI.
\end{equation}
They also discuss regularization of node transformations $\mW$ to bias towards orthogonality. \cite{abbahaddou2024bounding} also detail a version of a GNN which enforces orthogonality in the matrix $\mW$. We note that our study focuses on orthogonality in the node evolution during message passing. We discuss in the main text how to combine this with orthogonality in the feature transformation $\mW$. 

Our work also has overlap with the recently proposed fractional Graph Laplacian approach of \cite{maskey2024fractional}. There, they analyze a continuous-time message passing scheme called the fractional Schrödinger equation taking the form
\begin{equation}
    \frac{\partial \tX}{\partial t} = i \mL^\alpha \tX \mW,
\end{equation}
where $\alpha>0$ is a chosen parameter and $\mL$ is the normalized graph Laplacian. Unitarity can be strictly enforced in this continuous time format although \cite{maskey2024fractional} do not take this approach. In fact, solving this continuous-time differential equation for some time $t$, we obtain $\operatorname{vec}(\tX)(t) = \exp(it\mL^\alpha \otimes \mW) \operatorname{vec}(\tX)(0)$ where $\operatorname{vec}(\tX)(t)$ is the vectorized version of $\tX$ at time $t$. Constraining $\mL^\alpha \otimes \mW$ to be Hermitian obtains a unitary transformation. Beyond enforcing unitarity, our method works over discrete time, and we find it more efficient to parameterize the weight matrix $\mW$ outside of the exponential map.

A recent preprint proposes a unitary message passing algorithm called \emph{graph unitary message passing} (GUMP) which sends messages through a larger ring graph constructed from the edges of the original graph \cite{qiu2024graph}. Given a graph $G$ with nodes and edges $V$ and $E$ respectively, the steps of this approach loosely are as follows:
\begin{enumerate}
    \item Construct a new directed graph (digraph) $G'$ with the same nodes and include directed edges $(i,j)$ and $(j,i)$ for any undirected edge $(i,j)$ in original graph.
    \item Convert $G'$ to its line graph $L(G')$: here, each node is an edge in $G'$ and two nodes in $L(G')$ share an edge if there is a path through the original edges in $G'$ (e.g. nodes corresponding to edges $(i,j)$ and $(j,k)$).
    \item Construct new node representations in the line graph where for each edge $(i,j)$ we append the node representations $[\vx_i, \vx_j]$.
    \item Given the adjacency matrix $\mA_L$ for $L(G')$, find permutation matrices $\mP_1,\mP_2$ which form a block diagonal matrix $\mP_1^\intercal \mA_L \mP_2$.
    \item Project each block to its closest unitary in Frobenius norm according to the closed form projection operation \cite{keller1975closest}:
    \begin{equation}
        \widetilde{\mA}_L = \argmin_{\mU \in \, \mathcal{U}(|L(G')|)} \| \mA_L - \mU \|_F^2 = \mA_L (\mA_L^\dagger \mA_L)^{-\frac{1}{2}}.
    \end{equation}
    In the above, $\mathcal{U}(n)$ denotes the set of unitary matrices in $\mathcal{C}^{n \times n}$.
    \item Invert the permutations obtaining $\mP_1 \widetilde{\mA}_L \mP_2^\intercal$.
    \item Perform GNN convolution using the adjacency matrix constructed previously for $k$ layers.
    \item Attach node representations of $L(G')$ to corresponding nodes of $G$ and append these to the initial representations $\vx$ to output the final representations.
\end{enumerate}
As evident above, the approach here is rather meticulous and many details of the approach are left out. We refer the reader to the preprint \cite{qiu2024graph} for those details. Key to this procedure is that unitarity is roughly enforced by constructing the matrix $\widetilde{\mA}_L$ which is itself unitary in the vector space of the line graph $L(G')$. Nonetheless, the map from node representations on the original graph $G$ to new node representations on $G$ will only be approximately unitary as the additional steps here will not guarantee properties of isometry and invertibility in general.

For sake of comparison, we should note that the complexity of this approach (in number of nodes $|V|$ and edges $|E|$) is at least $\omega(|E|^2)$ and in practice $O(|E|^3)$ where the most expensive step is the unitary projection which requires a matrix inverse square root. More efficient parameterizations of this procedure or approximations to the projection exist as noted in prior work \cite{kiani2022projunn,lezcano2019cheap,li2020efficient}, but this would only improve runtimes to $\omega(|E|^2)$ at best with additional overhead. Furthermore, the GUMP algorithm expands the memory needed to store hidden states from $O(|V|)$ to $O(|E|)$. In comparison, our approach can be made strictly unitary and is more efficient scaling only a constant factor times standard graph convolution runtime to $O(T|V||E|)$ where $T$ is the truncation in the matrix exponential approximation (see \Cref{app:exp_map_approx}) with no change to the way hidden states are stored. Our approach is also noticeably simpler as it only reparameterizes the message passing step to be an exponential map over standard message passing procedures.

\section{Deferred proofs}
\label{app:deferred_proofs}

First, we review the properties shown in \Cref{fact:properties} that unitary convolution meets the properties of invertibility, isometry, and equivariance. This fact follows virtually immediately from the definition of unitarity and equivariance.

\begin{proof}[Proof of \Cref{fact:properties}]
    The proofs of isometry and invertibility follow directly from the definition of unitarity. What remains to be shown is the equivariance property. For unitary convolution built using \Cref{alg:lie_algebra}, equivariance can be checked by noting that the exponential map is a composition of equivariant linear operators. For unitary convolution in the Fourier domain (\Cref{alg:fourier_basis}), equivariance follows from convolution theorems where linear operators appropriately applied in the block diagonal Fourier basis are equivariant \cite{fulton2013representation,kondor2018generalization}.
\end{proof}

As an application, unitary transformations are often used to provide provable robustness guarantees to adversarial perturbations of inputs \cite{trockman2021orthogonalizing,li2019preventing,madry2017towards,sedghi2018singular}.

\begin{corollary}[Certified robustness to adversarial perturbations \cite{trockman2021orthogonalizing}] \label{cor:robustness}
    A simple consequence of the isometry property of the unitary convolution is that $\|f_{\Uconv}(\vx) - f_{\Uconv}(\vy)\| = \|\vx -\vy\|$ so the Lipschitz constant of this function is $1$. Assume neural network classifier $f:\mathbb{R}^n \to \mathbb{R}^m$ is composed of unitary transformations and $1$-Lipschitz bounded nonlinearities followed by an $L$-Lipschitz transformation and for given input $\vx$, it has margin $\mathcal{M}_f(\vx)$ defined as
    \begin{equation}
        \mathcal{M}_f(\vx) = \max_{t \in [m]}\left\{0, [f(\vx)]_t - \max_{i \in [m], i\neq t} [f(\vx)]_i\right\}.
    \end{equation}
    Then, $f$ is certifiably robust (i.e. classification is unchanged) to perturbations $\vx + \Delta$ of magnitude $\| \Delta \| < \mathcal{M}_f(\vx) (\sqrt{2} L)^{-1}$. 
\end{corollary}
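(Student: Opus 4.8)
The plan is to reduce the statement to a single Lipschitz bound on the full network $f$ together with an elementary argument about when the argmax of the output can change. First I would observe that every layer of $f$ is Lipschitz with a controlled constant: the unitary convolutions are isometries by the isometry property of \Cref{fact:properties} (hence $1$-Lipschitz), the nonlinearities are assumed $1$-Lipschitz, and the final readout is $L$-Lipschitz. Since the Lipschitz constant of a composition is at most the product of the individual constants, $f$ itself is $L$-Lipschitz, i.e. $\|f(\vx+\Delta) - f(\vx)\| \le L\|\Delta\|$ for every perturbation $\Delta$.

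Next I would translate ``the classification is unchanged'' into a condition on output coordinates. Let $t = \argmax_s [f(\vx)]_s$ be the predicted class, so by definition $[f(\vx)]_t - [f(\vx)]_i \ge \mathcal{M}_f(\vx)$ for every $i \ne t$. The label flips at $\vx + \Delta$ only if some competitor $i \ne t$ satisfies $[f(\vx+\Delta)]_i \ge [f(\vx+\Delta)]_t$. Writing $\delta = f(\vx+\Delta) - f(\vx)$ and decomposing,
\[
[f(\vx+\Delta)]_i - [f(\vx+\Delta)]_t = \big([f(\vx)]_i - [f(\vx)]_t\big) + \langle \ve_i - \ve_t, \delta\rangle \le -\mathcal{M}_f(\vx) + \langle \ve_i - \ve_t,\delta\rangle.
\]

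The key quantitative step is to bound the inner product. By Cauchy--Schwarz and $\|\ve_i - \ve_t\| = \sqrt{2}$ (two distinct standard basis vectors), $\langle \ve_i-\ve_t,\delta\rangle \le \sqrt{2}\,\|\delta\| \le \sqrt{2}\,L\,\|\Delta\|$. Substituting, a flip to class $i$ requires $0 \le [f(\vx+\Delta)]_i - [f(\vx+\Delta)]_t \le -\mathcal{M}_f(\vx) + \sqrt{2}L\|\Delta\|$, i.e. $\|\Delta\| \ge \mathcal{M}_f(\vx)(\sqrt{2}L)^{-1}$. Contrapositively, whenever $\|\Delta\| < \mathcal{M}_f(\vx)(\sqrt{2}L)^{-1}$ no competitor can overtake class $t$, so the prediction is certified robust.

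There is no serious obstacle here --- the corollary is essentially a bookkeeping consequence of the isometry property already established. The only point requiring care is the $\sqrt{2}$ factor, which must be tracked as the norm of $\ve_i - \ve_t$ rather than the norm of a single coordinate change; dropping it would give the incorrect threshold $\mathcal{M}_f(\vx)/L$. It is also worth noting that the argument uses the unitary layers only through their being $1$-Lipschitz, so the bound applies verbatim to any $1$-Lipschitz feature map, with unitarity being the mechanism by which that ($1$-Lipschitz, indeed isometric) property is guaranteed throughout training.
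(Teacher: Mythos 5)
Your proof is correct and is precisely the argument this corollary rests on: the paper itself states the result without proof, deferring to \cite{trockman2021orthogonalizing}, and the standard certification argument there is exactly your chain --- unitary layers are isometric (Fact~\ref{fact:properties}), so the network is $L$-Lipschitz by composition, and a class flip forces $\langle \ve_i - \ve_t, \delta\rangle \geq \mathcal{M}_f(\vx)$, which by Cauchy--Schwarz with $\|\ve_i - \ve_t\| = \sqrt{2}$ requires $\|\Delta\| \geq \mathcal{M}_f(\vx)(\sqrt{2}L)^{-1}$. Nothing is missing, and your tracking of the $\sqrt{2}$ factor is the right point of care.
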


In the main text, we noted in \Cref{prop:no_unitary_real} that the above facts in some way cannot be obtained using the standard graph convolution. We restate this here and prove it.
\nounitaryreal*
\begin{proof}
    Note that for (a potentially normalized) adjacency matrix $\mA$, the Jacobian $\mJ_{\mA}$ in the basis of $\operatorname{vec}(\mX)$ is equal to 
    \begin{equation}
        \mJ_{\mA} = \mI \otimes \mW_0^\top + \mA \otimes \mW_1^\top.
    \end{equation}
    Thus, for the map to be orthogonal, it must hold that 
    \begin{equation}
        \mI = \mJ_{\mA}\mJ_{\mA}^\top = \mI \otimes \mW_0^\top \mW_0 + \mA \otimes \mW_0^\top \mW_1 + \mA \otimes \mW_1^\top \mW_0 + \mA^2 \otimes \mW_1^\top \mW_1.
    \end{equation}
    First, let $\mA_0$ be the empty graph on two nodes so $\mA = \bm 0$, then
    \begin{equation}
        \mJ_{\mA_0}\mJ_{\mA_0}^\top = \mI \otimes \mW_0^\top \mW_0,
    \end{equation}
    which means $\mW_0$ must be orthogonal if $\mJ_{\mA_0}$ is orthogonal. A simple instance with the desired structure can be constructed as follows: Consider the graph on two nodes, which is connected with adjacency matrix $\mA_1$. Here, we have 
    \begin{equation}
        \mJ_{\mA_1}\mJ_{\mA_1}^\top = \mI \otimes \left( \mW_0^\top \mW_0 + \mW_1^\top \mW_1 \right) + \mA \otimes \left(\mW_1^\top \mW_0 + \mW_0^\top \mW_1 \right).
    \end{equation}
    Note that if $\mW_0$ is orthogonal, then $\mW_1=\bm 0$ if $\mJ_{\mA_1}\mJ_{\mA_1}^\top = \mI$. This proves the first part of the proposition. 
    
    For the second part, we take traces and note that 
    \begin{equation}
    \begin{split}
        \Tr\left(\mJ_{\mA_0}\mJ_{\mA_0}^\top\right) &= 2\|\mW_0\|_F^2, \\
        \Tr\left(\mJ_{\mA_1}\mJ_{\mA_1}^\top\right) &= 2\|\mW_0\|_F^2 + 2\|\mW_1\|_F^2. 
    \end{split}
    \end{equation}
    In contrast, $\Tr(\mI)=2d$. Therefore, for any choice of the values of $\|\mW_0\|_F^2, \|\mW_1\|_F^2$, it must hold that either 
    \begin{equation}
        \left| \Tr\left(\mJ_{\mA_0}\mJ_{\mA_0}^\top\right) - 2d \right| \geq \|\mW_1\|_F^2
    \end{equation}
    or 
    \begin{equation}
        \left| \Tr\left(\mJ_{\mA_1}\mJ_{\mA_1}^\top\right) - 2d \right| \geq \|\mW_1\|_F^2.
    \end{equation}
    W.l.o.g. assume the first event holds. Denoting the singular values of a matrix $\mJ_{\mA_1}\mJ_{\mA_1} - \mI$ as $s_1, \dots, s_{2d}$, we have  
    \begin{equation}
        \|\mW_1\|_F^2 \leq \left| \Tr\left(\mJ_{\mA_1}\mJ_{\mA_1}^\top - \mI\right) \right| \leq s_1 + \cdots + s_{2d} \leq 2d \max_{i} s_i.
    \end{equation}
    Rearranging, we obtain the final result.
\end{proof}

One consequence of the above fact is that fully parameterized unitary graph convolution requires higher order powers of $\mA$ to be implemented. This is essentially one reason why the exponential map (or some approximation thereof) is needed.

We now show that the Rayleigh quotient defined in \Cref{def:rayleigh_quotient} is invariant under unitary transformations. As before, this will follow virtually directly from the unitarity properties. We restate the proposition below.
\oversmoothingunitary*

\begin{proof}
    First, consider separable unitary convolution (\Cref{def:unitary_graph_conv}). The Rayleigh quotient takes the form
    \begin{equation}
        R_{\mathcal{G}}(f_{\Uconv}(\mX)) = \frac{\Tr\left( \left(\exp(i\widetilde{\mA}) \mX  \mU\right)^{\dagger} (\mI-\widetilde{\mA}) \exp(i\widetilde{\mA}) \mX  \mU \right)}{\|\exp(i\widetilde{\mA}) \mX  \mU\|_F^2}.
    \end{equation}
    The Frobenius norm is invariant under unitary transformations so the denominator $\|\exp(i\widetilde{\mA}) \mX  \mU\|_F^2 = \| \mX \|_F^2$. For the numerator we have by the cyclic property of trace
    \begin{equation}
        \Tr\left( \left(\exp(i\widetilde{\mA}) \mX  \mU\right)^{\dagger} (\mI-\widetilde{\mA}) \exp(i\widetilde{\mA}) \mX  \mU \right) = \Tr\left( \mX^{\dagger} \exp(-i\widetilde{\mA}) (\mI-\widetilde{\mA}) \exp(i\widetilde{\mA}) \mX  \right) 
    \end{equation}
    The matrices $\exp(-i\widetilde{\mA}), (\mI-\widetilde{\mA}), \exp(i\widetilde{\mA})$ all share the same eigenbasis so they commute and thus 
    \begin{equation}
        \Tr\left( \mX^{\dagger} \exp(-i\widetilde{\mA}) (\mI-\widetilde{\mA}) \exp(i\widetilde{\mA}) \mX  \right) = \Tr\left( \mX^{\dagger}  (\mI-\widetilde{\mA}) \mX  \right).
    \end{equation}
    Thus, we have
    \begin{equation}
        R_{\mathcal{G}}(f_{\Uconv}(\mX)) = \frac{\Tr\left( \mX^{\dagger} (\mI-\widetilde{\mA})  \mX  \right)}{\|\mX\|_F^2} = R_{\mathcal{G}}(\mX).
    \end{equation}
    Similarly for Lie unitary/orthogonal convolution (\Cref{def:lie_orthogonal_conv}), the isometry property guarantees $\|\mX\|_F^2 = \|f_{\Uconv}(\mX)\|_F^2$. Furthermore, when viewed as a linear map in the basis of $\operatorname{vec}(\mX) \in \mathbb{C}^{nd}$ (i.e. entries of $\mX$ viewed as a vector), $f_{\Uconv}$ can be written as a linear map of the form
    \begin{equation}
        \operatorname{vec}(f_{\Uconv}(\mX)) = \exp(\mA \otimes \mW^\top) \operatorname{vec}(\mX),
    \end{equation}
    where $\mW$ is the feature transformation matrix in \Cref{def:lie_orthogonal_conv}.
    Finally, note that $\exp(\mA \otimes \mW^\top)$ commutes with $\mA$ so 
    \begin{equation}
    \begin{split}
        \Tr&\left( f_{\Uconv}(\mX)^\dagger  (\mI-\widetilde{\mA}) f_{\Uconv}(\mX) \right) \\
        &= \operatorname{vec}(\mX)^\dagger \exp(\mA \otimes \mW^\top)^\dagger  \left[(\mI-\widetilde{\mA}) \otimes \mI\right] \exp(\mA \otimes \mW^\top) \operatorname{vec}(\mX)\\
        &= \operatorname{vec}(\mX)^\dagger  \left[(\mI-\widetilde{\mA}) \otimes \mI\right] \operatorname{vec}(\mX).
    \end{split}
    \end{equation}
    Multiplying the above by $\|\mX\|_F^{-2}$ recovers $R_{\mathcal{G}}(\mX)$.
\end{proof}

In contrast, we gave an example (\Cref{prop:rayleigh_vanilla}) where the Rayleigh quotient decays with high probability for vanilla convolution, which we restate below.
\rayleighvanilla*
\begin{proof}
    Note that 
    \begin{equation}
        \mathbb{E}\left[R_{\mathcal{G}}(\mX)\right] = \mathbb{E}\left[\frac{\Tr\left( \mX^{\top} (\mI-\widetilde{\mA}) \mX \right)}{\|\mX\|_F^2}\right] = \frac{1}{n} \mathbb{E}\left[\Tr\left(  (\mI-\widetilde{\mA}) \mX \mX^{\top}\right) \right].
    \end{equation}
    By symmetry properties of the uniform distribution $\operatorname{Unif}(S^{d-1})$ on the hypersphere, $\mathbb{E}[\mX \mX^{\top}]=\mI$. Thus, 
    \begin{equation}
        \mathbb{E}\left[R_{\mathcal{G}}(\mX)\right] = 1.
    \end{equation}
    Furthermore, treating $R_{\mathcal{G}}(\mX)$ as a function of its node features $\vx_1, \dots, \vx_n$ where $\vx_i=\mX_{i,:}^\top$, we have by the Azuma-Hoeffding inequality that 
    \begin{equation}
        \mathbb{P}\left[\left| R_{\mathcal{G}}(\mX) - \mathbb{E}R_{\mathcal{G}}(\mX)\right| \geq \epsilon \right] \leq \exp(-\Omega(n\epsilon^2)).
    \end{equation}
    The above can be shown by noting that the operator norm of $\mI - \widetilde{\mA}$ is bounded by two and changing the values of any $\vx_i$ changes $R_{\mathcal{G}}(\mX)$ by at most $4/n$. Therefore, setting $\epsilon = \varepsilon n^{-1/4}$, we get that with probability $1-\exp(-\Omega(\varepsilon^2\sqrt{n}))$,
    \begin{equation}
        R_{\mathcal{G}}(\mX) = 1 - O(1/n^{1/4}).
    \end{equation}

    For $R_{\mathcal{G}}(f_{\conv}(\mX))$, we have
    \begin{equation}
    \begin{split}
        \mathbb{E}\left[R_{\mathcal{G}}(f_{\conv}(\mX))\right] &= \mathbb{E}\left[\frac{\Tr\left( \mW^\top \mX^{\top} \widetilde{\mA} (\mI-\widetilde{\mA}) \widetilde{\mA}\mX \mW \right)}{\|\widetilde{\mA}\mX \mW\|_F^2}\right] \\
        &= \mathbb{E}\left[\frac{\Tr\left(  \mX^{\top} \widetilde{\mA} (\mI-\widetilde{\mA}) \widetilde{\mA}\mX \right)}{\|\widetilde{\mA}\mX \|_F^2}\right] \\
        &= \mathbb{E}\left[1 - \frac{\Tr\left(  \mX^{\top} \widetilde{\mA}^3 \mX \right)}{\|\widetilde{\mA}\mX \|_F^2}\right].
    \end{split}
    \end{equation}
    Again, using the Azuma-Hoeffding argument as before, the numerator above concentrates around its expectation as
    \begin{equation}
        \mathbb{P}\left[\left| \Tr\left(  \mX^{\top} \widetilde{\mA}^3 \mX \right) - \mathbb{E}\Tr\left(  \mX^{\top} \widetilde{\mA}^3 \mX \right)\right| \geq \epsilon n \right] \leq \exp(-\Omega(\epsilon^2 n)).
    \end{equation}
    A similar statement holds for the denominator $\|\widetilde{\mA}\mX \|_F^2$. Furthermore, we have by symmetry properties of the distribution of $\mX$ and linearity of expectation: 
    \begin{equation}
        \mathbb{E}\Tr\left(  \mX^{\top} \widetilde{\mA}^3 \mX \right) = \Tr(\widetilde{\mA}^3), \quad \mathbb{E}\|\widetilde{\mA}\mX \|_F^2 = \Tr(\widetilde{\mA}^2).
    \end{equation}
    Combining the above facts and applying the union bound, we have that with probability $1-\exp(\Omega(\varepsilon^2 \sqrt{n}))$,
    \begin{equation}
        1 - \frac{\Tr\left(  \mX^{\top} \widetilde{\mA}^3 \mX \right)}{\|\widetilde{\mA}\mX\|} \leq 1 - \frac{\Tr(\widetilde{\mA}^3) - \varepsilon n^{3/4} }{\Tr(\widetilde{\mA}^2) + \varepsilon n^{3/4}} = 1-\frac{\Tr(\widetilde{\mA}^3) }{\Tr(\widetilde{\mA}^2)} + O(n^{-1/4}).
    \end{equation}
    In the last equality, we use the fact that $\mA$ has bounded degree and thus $\Tr(\widetilde{\mA}^2) = \Theta(n)$.
\end{proof}

\section{Background on representation theory, Lie groups, exponential map, and related approximations}
\label{app:lie_groups_and_exponential_map}

\subsection{Matrix Lie groups}
We give a brief overview of matrix Lie groups and Lie algebras here and recommend \cite{hall2015lie} for detailed exposition. Matrix Lie groups are subsets of invertible matrices that form a differentiable manifold formally defined below. 
\begin{definition}[Matrix Lie groups \cite{hall2015lie}]
    A matrix Lie group is any subgroup of $GL(n, \mathbb{C})$ with the property that any sequence of matrices $\mM_m \in \mathbb{C}^{n \times n}$ in the subgroup converge to a matrix $\mM$ that is either an element of the subgroup or not invertible (\textit{i.e.,} not in $GL(n, \mathbb{C})$).
\end{definition}

The orthogonal and unitary groups whose definitions are copied below meet these criteria.
\begin{align}
    O(n) &= \left\{ \mM \in \mathbb{R}^{n \times n} | \mM \mM^\intercal = \mM^\top \mM = \mI  \right\}, \\
    U(n) &= \left\{ \mM \in \mathbb{C}^{n \times n} | \mM \mM^\dagger = \mM^\dagger \mM = \mI  \right\}.
\end{align}

A crucial operation in matrix Lie groups is the exponential map. Given a matrix $\mM:\mathcal{C}^{n \times n}$ which is an endomorphism $\operatorname{End}(\mathbb{C}^n)$ on the vector space $\mathbb{C}^n$, the exponential map $\exp:\operatorname{End}(\mathbb{C}^n) \to \operatorname{End}(\mathbb{C}^n)$ returns another matrix (endomorphism) and is defined as
\begin{equation}
    \exp(\mM) = \sum_{p=0}^\infty \frac{1}{p!} \mM^p.
\end{equation}
The exponential map, as we will show below, maps from the Lie algebra to the Lie group. It also has an interpretation over Riemannian manifolds which we do not discuss here. We refer the reader to a textbook \cite{petersen2006riemannian} or prior work in machine learning \cite{lezcano2019cheap} for further details of that connection. For compact groups, the exponential map is a smooth map whose image is the connected component to the identity of the Lie group \cite{kirillov2008introduction,hall2015lie}. The Lie algebra is the tangent space of a Lie group at the identity element. To see this, note that
\begin{equation}
    \frac{d}{dt} \exp(t\mX) = \mX \exp(t\mX) = \exp(t\mX)  \mX,
\end{equation}
and 
\begin{equation}
    \frac{d}{dt} \exp(t\mX) \Bigr|_{t=0} = \mX.
\end{equation}
Note that $\exp(\bm 0) = \mI$. The above gives us the Lie algebra to a given group. 
\begin{definition}[Lie algebra \cite{hall2015lie}]
Given a matrix Lie group $G$, the Lie algebra $\mathfrak{g}$ of $G$ is the set of matrices $\mX$ such that $e^{t\mX} \in G$ for all $t \in \mathbb{R}$.
\end{definition}
As an example, consider the unitary group where given a matrix $\mU \in U(n)$ and $\mX \in \mathfrak{u}(n)$. Here, we have
\begin{equation}
\begin{split}
    \frac{d}{dt} \exp(-t\mX) \Bigr|_{t=0} &= \frac{d}{dt}  \exp(t\mX^\dagger)\Bigr|_{t=0} \implies
    -\mX = \mX^\dagger.
\end{split}
\end{equation}

\subsection{Exponential map}
\label{app:exp_map_approx}

First, let us recall the definition of the exponential map.
Given a linear operator $\tL:\mathcal{V}\to\mathcal{V}$ which is an endomorphism $\operatorname{End}(\mathcal{V})$ on a vector space $\mathcal{V}$, the exponential map $\exp:\operatorname{End}(\mathcal{V}) \to \operatorname{End}(\mathcal{V})$ is defined as
\begin{equation}
    \exp(\tL)(\tX) = \sum_{p=0}^\infty \frac{1}{p!} \tL^p(\tX) = \tX + \tL (\tX) + \frac{1}{2}\tL \circ \tL (\tX) + \frac{1}{6}\tL \circ \tL \circ \tL (\tX) + \cdots
\end{equation}

Applying the exponential map of a linear operator to a given vector in a vector space of dimension $n$ to computer precision can scale in practice as $O(n^3)$ similarly to performing an eigendecomposition. In fact, for matrices $\mM \in \mathbb{C}^{n \times n}$ which have an eigendecomposition $\mM = \mU \mD \mU^\dagger$ (for skew-Hermitian matrices $\mM \in \mathfrak{u}(n)$, the spectral theorem guarantees the existence of an eigendecomposition.), $\exp(\mM) = \mU \exp(\mD) \mU^\dagger$ where $\exp(\mD)$ can be easily implemented by performing the exponential elementwise on the diagonal entries of $\mD$. However, in practice, we can exploit approximations which avoid expensive operations such as eigendecompositions. 

\paragraph{Taylor approximation}
Often the simplest and most efficient approximation is a $k$-th order truncation to the Taylor series 
\begin{equation} \label{eq:taylor_appr_truncation}
    \exp[\conv] (\tX) \approx \sum_{p=0}^k \frac{1}{p!} \conv^{(p)} ( \tX ),
\end{equation}
where $\conv^p:\mathcal{V} \to \mathcal{V}$ indicates the composition of the operator $\conv$ $p$ times. By Taylor's theorem, one can bound the error in this approximation as $$\left\| \exp[\conv] (\tX) -  \sum_{p=0}^k \frac{1}{p!} \conv^p ( \tX ) \right\|_2 \leq O\left(\frac{\|\conv\|^{k+1}\|\tX\|_2}{(k+1)!}\right),$$ where $\|\conv\|$ denotes the operator norm of $\conv$. Therefore, error decreases exponentially with $k$. In practice, we find setting $k=12$ is sufficiently accurate for the GNNs we train in our experiments. Of course, $k$ can be increased in settings with larger graphs or more training time where instabilities are likely to arise. We also by default perform unitary convolution with normalized adjacency matrices $\mD^{-1/2}\mA\mD^{-1/2}$ ($\mD$ is the diagonal degree matrix) so that the operator norm of the linear convolution operation is bounded by one.

\paragraph{Padé approximant} 
Padé approximations are optional rational functional approximations of a given function up to a given order. In unitary settings, Padé approximants are often preferred because they return a unitary operator. In contrast, Taylor approximations discussed previously only return a linear operator which can be made arbitrarily close to a unitary operator. Padé approximants of order $k$ take the form
\begin{equation}
    \exp(\mM) \approx p_k(\mM) q_k(\mM)^{-1},
\end{equation}
where $p_k,q_k$ are degree $k$ polynomials. The degree $k$ Padé approximant agrees with the Taylor expansion up to order $2k$. The first order Padé approximant, also known as the Cayley map, has been used in prior neural network implementations \cite{wisdom2016full,helfrich2018orthogonal,trockman2021orthogonalizing} and takes the form
\begin{equation}
    \exp(\mM) \approx \left( \mI + \frac{1}{2} \mM\right) \left(\mI - \frac{1}{2}\mM \right)^{-1}.
\end{equation}
One practical drawback of Padé approximants are that they typically require implementations of a matrix inverse or approximations thereof which render this more challenging to implement. In our setting, we found that the Taylor approximation sufficed in accuracy and stability so we did not implement this.

Finally, we should remark that Padé approximants can be used to pre-compute matrix exponentials to computer precision \cite{higham2009scaling,al2010new,al2009computing}. \cite{lezcano2019cheap} use these techniques to parameterize unitary layers accurately for RNNs. One could pre-compute matrix exponentials of adjacency matrices in our setting to speed up training, though we have not implemented this in our experiments. We leave this for future work.

\paragraph{Other implementations of exponential map} We briefly mention here for sake of completeness a different approach to approximating the exponential map based on the Baker-Campbell-Hausdorff formula \cite{hall2015lie} which states that for matrices $\mX,\mY \in \mathbb{C}^{n \times n}$:
\begin{equation}
    \exp(\mX)\exp(\mY) = \exp\left( \mX + \mY + \frac{1}{2}[\mX,\mY] + \frac{1}{12}[\mX,[\mX,\mY]] - \frac{1}{12}[\mY,[\mX,\mY]] + \cdots \right).
\end{equation}

The Lie-Trotter approximation uses the above fact to implement the matrix exponential of a sum of matrices as a product of matrix exponentials over elements of the sum. This method is often used in high dimensional spaces where the output of the exponential map on each Lie algebra generator is known. The first order expansion takes the form \cite{trotter1959product,childs2021theory}
\begin{equation}
\exp(\mX + \mY) = \lim_{n \to \infty} \left[\exp\left(\frac{\mX}{n}\right) \exp\left(\frac{\mY}{n}\right)\right]^n \approx \left[\exp\left(\frac{\mX}{k}\right) \exp\left(\frac{\mY}{k}\right)\right]^k,
\end{equation}
where $k$ is a positive integer controlling the level of approximation. The above is accurate to order $O(1/k)$, and higher order accurate schemes exist. These methods are commonly used in machine learning, quantum computation, and numerical methods\footnote{In numerical methods, they fall under the umbrella of splitting methods or Strang splitting.} \cite{childs2021theory,mialon2023self,lloyd1996universal,strang1968construction,mclachlan2002splitting}. One advantage of this approach is that the approximation applied to an operator in the Lie algebra always returns an element in the Lie group since the approximation is a product of Lie group elements themselves.

\section{Fourier implementation of group convolution}
\label{app:fourier_group}

In the main text, we described a generalized procedure to implement unitary convolution with parameterized operators in the Lie algebra of a group. We complement that with a mostly informal discussion on how to implement unitary convolution in the Fourier domain. The algorithms are summarized in \Cref{alg:fourier_basis} and \Cref{alg:lie_algebra_backup}.

\begin{figure}[h]
\centering
\begin{minipage}{.47\linewidth}
\begin{algorithm}[H]
\caption{Unitary map from Lie algebra}
\small
\begin{algorithmic}[1]
\Require equivariant linear operator $\mL \in \mathbb{C}^{n} \to \mathbb{C}^{n}$
\Require vector $\mathbf{x} \in \mathbb{C}^{n}$
\State $\widetilde{\mL} = \frac{1}{2}(\mL - \mL^\dagger)$ (skew symmetrize operator)
\State \textbf{return} $\exp(\widetilde{\mL}) (\mathbf{x})$ (or approximation thereof)
\end{algorithmic}
\label{alg:lie_algebra_backup}
\end{algorithm}
\end{minipage}%
\hfill
\begin{minipage}{.47\linewidth}
\begin{algorithm}[H]
\caption{Unitary map in Fourier basis}
\small
\begin{algorithmic}[1]
\Require Fourier transform $\mathcal{F}: \mathbb{C}^n \to \bigoplus_{i=1}^m \mathbb{C}^{d_i \times d_i}$
\Require Unitary operators $\{\mU_i\}_{i=1}^m$, $\mU_i \in U(d_i)$
\Require vector $\mathbf{x} \in \mathbb{C}^{n}$
\State $\mY = \mathcal{F}(\mathbf{x})$ (apply Fourier transform)
\State $\mZ = \left[\bigoplus_{i=1}^m \mU_{i} \right] \mY$ (apply block diagonal unitaries)
\State \textbf{return} $\mathcal{F}^{-1}(\mZ)$ (inverse Fourier transform)
\end{algorithmic}
\label{alg:fourier_basis}
\end{algorithm}
\end{minipage}
\label{alg:unitary_maps}
\end{figure}

Before detailing the algorithm in \Cref{alg:fourier_basis}, we provide a brief overview of Fourier-based convolutions over arbitrary groups, which generalizes the classical Fourier transform to that over arbitrary groups. We recommend \cite{serre1977linear} for a more complete and rigorous background. Throughout this, we will assume that groups are finite, though this can be generalized to other groups, especially those that are compact.

A representation of a group is a map $\rho:G\to \mathbb{F}^{d \times d}$ for a given field $\mathbb{F}$ such that $\rho(g)\rho(g')=\rho(gg')$ (homomorphism). A representation is reducible if there exists an invertible matrix $Q \in \mathbb{F}^{d \times d}$ such that $Q\rho(g)Q^{-1}=\oplus_{i=1}^k \rho_i'(g)$ is a direct sum of at least $k\geq2$ representations $\rho_1',\dots,\rho_k'$. A representation is irreducible if such a decomposition does not exist. For any finite group $G$, a system of unique irreps $\rho_1, \dots, \rho_K$ always exists. It holds that $\sum_{i} \operatorname{dim}(\rho_i)^2=|G|$ for any such set of unique irreducible representations \cite{serre1977linear}. Here, the uniqueness is to eliminate redundancies due to the fact that any irrep $\rho$ can be mapped to a new one by an invertible transformation $\rho'(g) = Q \rho(g) Q^{-1}$.

The Fourier transform of a function $f : G \to \mathbb{C}$ with respect to a set of irreducible representations (irreps) $\{\rho_i\}_{i=1}^{m}$ is given by: 
\begin{equation}\label{eq
} \hat{f}(\rho_i) = \sum_{u \in G} f(u) \rho_i(u), \quad i = 1, 2, \dots, m. \end{equation}
For abelian groups these irreps are all one dimensional. The set of $\rho_i(u)$ for the cyclic group for example correspond to the entries of the discrete Fourier transform matrix. For non-abelian groups, there always exists an irrep which is at least of dimension $2$.

In \Cref{alg:fourier_basis}, we denote the Fourier transform $\mathcal{F}:\mathbb{C}^n \to \oplus_{i=1}^m \mathbb{C}^{d_i \times d_i}$ as a map that takes in the inputs to a function $f$ and outputs a direct sum of the Fourier basis of the function $\hat{f}$ over the irreps. Given two functions $f, g : G \to \mathbb{C}$, their convolution outputs another function $(f \ast g):G \to \mathbb{C}$ and is equal to 
\begin{equation}
    (f \ast g)(u) = \sum_{v \in G} f(uv^{-1})g(v).
\end{equation}
In the main text, we describe these operations as vector operations over a vector space $\mathbb{C}^n$. Setting $n=|G|$ and taking the so-called regular representation as maps acting on $\mathbb{C}^{|G|}$ can recover the form in the main text. Finally, the Fourier transform of their convolution is the matrix product of their respective Fourier transforms: 
\begin{equation}
\widehat{(f \ast g)}(\rho_i) = \hat{f}(\rho_i) \hat{g}(\rho_i), \end{equation} 
where ${\rho_i}_{i=1}^{m}$ are the irreps of $G$. The above assumes that all the functions have input domain over the group. This is not strictly necessary and generalizations exist which map functions on homogeneous spaces to the setting above \cite{kondor2018generalization}.

The general implementation of Fourier convolution is given in \Cref{alg:fourier_basis}. Here, one employs a Fourier operator which block diagonalizes the input into its irreducible representations or some spectral representation. Then, applying blocks of unitary matrices in this representation and inverting the Fourier transform implements a unitary convolution. The details will depend on the particular form of the Fourier transform and irreducible representations. This method is often preferred when filters are densely supported and efficient implementations of the Fourier transform are obtained. Previous implementations have been designed for CNNs \cite{trockman2021orthogonalizing,kiani2022projunn}.

\begin{example}[Convolution on regular representation (Fourier basis)]
    Continuing the previous example, assume the group $G$ has unitary irreducible representations $\rho_1, \dots, \rho_m$ (irreps) where $\rho_i:G \to \mathbb{C}^{d_i \times d_i}$. The group Fourier transform maps input function $x:G \to \mathbb{C}$ to its irrep basis as  \cite{fulton2013representation,kondor2018generalization}
    \begin{equation}
        \hat{x}(\rho_i) = \sum_{g \in G} x(g) \rho_i(g),
    \end{equation}
    and group convolution is now block diagonal in this basis 
    \begin{equation}
        \widehat{(m \star x)}(\rho) =  \hat{x}(\rho) \hat{m}(\rho)^\dagger.
    \end{equation}
    Implementing unitary convolution requires that $\hat{m}(\rho)$ is unitary for all irreps $\rho$.
\end{example}

\section{Architectural considerations}
\label{app:architectural_considerations}

Handling complex numbers and enforcing isometry in neural networks requires changes to some standard practice in training neural networks. We summarize some of the important considerations here and refer the reader to surveys and prior works for further details \cite{bassey2021survey,trockman2021orthogonalizing,lezcano2019cheap,kiani2022projunn}. 

\paragraph{Handling different input and output dimensions}
Unitary and orthogonal transformations are defined on input and output spaces of the same dimension. Maintaining isometry for different input and output dimensions formally requires manipulations of the Stiefel manifold as studied in various prior works \cite{lezcano2019cheap,li2020efficient,nishimori2005learning,jiang2015framework}. When the input dimension is less than that of the output dimension, one simple way to implement semi-unitary or semi-orthogonal convolutions via standard unitary layers is simply to pad the inputs with zeros to match the output dimensionality. We also often will simply use a standard (unconstrained) linear transformation to first embed inputs in the given dimension that is later used for unitary transformations.

\paragraph{Nonlinearities} For handling complex numbers, one must typically redefine nonlinearities to handle complex inputs. We find that applying standard nonlinearities separately to the real and imaginary parts works well in practice in line with other works \cite{bassey2021survey}. To enforce isometry as well in the nonlinearity, we use the $\operatorname{GroupSort}: \mathbb{R}^2 \to \mathbb{R}^2$ activation \cite{trockman2021orthogonalizing,anil2019sorting}, which acts on a pair of numbers as
\begin{equation} \label{eq:groupsort}
    \operatorname{GroupSort}(a,b) = (\max(a,b), \min(a,b)),
\end{equation}
and is clearly norm-preserving. To apply this nonlinearity to a given layer, we split the channels or feature dimension into two separate parts and apply the nonlinearity across the split. 

\paragraph{Initialization}
Given the constraints on unitary matrices and skew-Hermitian matrices, prior work has proposed various forms of initialization that meet the constraints of these matrices. One strategy that has been effective in prior work and proposed in \cite{henaff2016recurrent,helfrich2018orthogonal} is to initialize in $2 \times 2$ blocks along the diagonal. For skew symmetric matrices, one way of achieving this is to initialize $2 \times 2$ blocks as 
\begin{equation}
    \begin{bmatrix}
        0 & s \\ -s & 0
    \end{bmatrix},
\end{equation}
where $s \sim \operatorname{Unif}(-\pi, \pi)$ for example \cite{henaff2016recurrent}.

\paragraph{Directed graphs}
Directed graphs present a challenge because their adjacency matrix is not guaranteed to be symmetric. Given an adjacency matrix $\mA \in \mathbb{R}^{n \times n}$ of a directed graph, one simple way to proceed is to split the directed graph into its symmetric and non-symmetric parts $\mA = \mA_{\rm sym} + \mA_{\rm nonsym}$. Here, we assume that for matrix entry $\mA_{ij}$, either $\mA_{ij} = \mA_{ji}$ or $\mA_{ij}\mA_{ji}=0$ (i.e. one of the transposed entries is zero). Then, we set
\begin{equation}
    \left[\mA_{\rm sym}\right]_{ij} = \begin{cases}
        \mA_{ij} & \text{if } \mA_{ij} = \mA_{ji}\\
        0 & \text{otherwise}
    \end{cases}
\end{equation}
and
\begin{equation}
    \left[\mA_{\rm nonsym}\right]_{ij} = \begin{cases}
        \mA_{ij} & \text{if } \mA_{ij} \neq \mA_{ji}, \; \mA_{ij} \neq 0 \\
        -\mA_{ji} & \text{if }\mA_{ij} \neq \mA_{ji}, \; \mA_{ij} = 0 \\
        0 & \text{otherwise}
    \end{cases}
\end{equation}
Finally, one can then perform graph convolution with the skew-Hermitian matrix
\begin{equation}
    \mH = i\mA_{\rm sym} + \mA_{\rm nonsym}.
\end{equation}
We do not work with directed graphs in our experiments and have not implemented this in our code.

\section{Additional experiments} \label{app:additional_experiments}

\subsection{Additional results on toy model of graph distance}
\label{app:additional_graph_distance}

\begin{figure}
    \centering
    \includegraphics[width=0.6\textwidth]{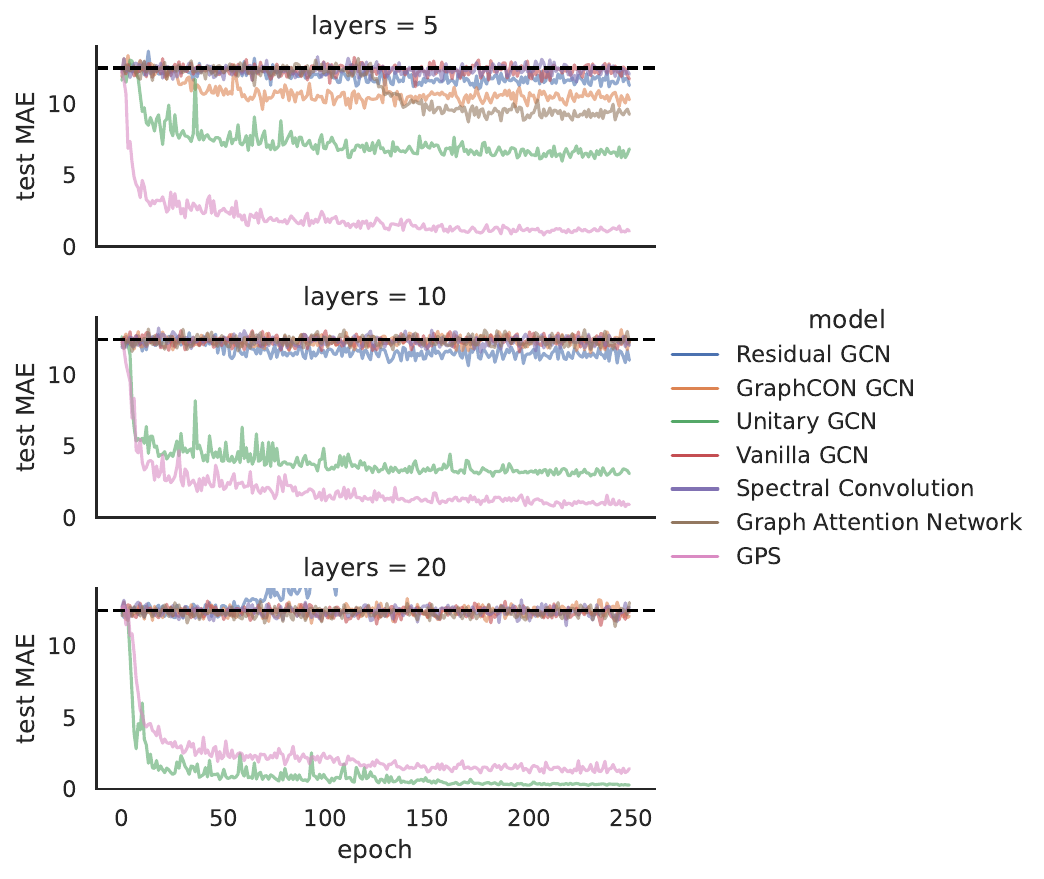}
    \caption{Additional results on the ring plot toy model including additional architectures. We show here the performance of various models with $5$, $10$, or $20$ layers. The unitary GCN is the only message passing architecture that achieves stable performance with added layers and can learn the task. Apart from message passing architectures, global transformer architectures like GPS can learn the task when given Laplacian positional encoding. The trivial performance corresponding to outputting the average output is shown as a dotted horizontal line. }
    \label{fig:ring_plot_expanded}
\end{figure}

\Cref{fig:ring_plot_expanded} shows additional results for the toy model considered in the main text. As a reminder, this task is to learn the graph distance between pairs of randomly selected nodes in a ring graph of $100$ nodes. Message passing architectures need at least $50$ sequential messages to fully learn this task. As layers are added to the unitary GCN, the network is able to learn the task better. This is in contrast to other message passing architectures which perform worse with additional layers. Apart from message passing architectures, strong performance is also achieved by transformer architectures like GPS with global attention. The networks we study are the vanilla GCN (Residual GCN includes skip connections) \cite{kipf2016semi}, graph attention network \cite{velivckovic2017graph}, spectral convolution \cite{zhu2020simple}, transformer-based GPS \cite{rampavsek2022recipe}, and graph-coupled oscillator network (GraphCON), which is a discretization of a second-order ODE on the graph \cite{rusch2022graph}.
Hyperparameters and additional network details are reported in \Cref{app:experimental_details}. 

\subsection{TU Datasets} 
We perform experiments on the ENZYMES, IMDB-BINARY, MUTAG, and PROTEINS tasks from the TU Dataset database \cite{Morris+2020}. As can be seen in table \ref{tab:tu_dataset}, with the exception of IMDB, a GCN with UniConv layers outperforms all message-passing GNNs tested against on all datasets, by margins of up to 18 percent. We follow the training procedure in \cite{nguyen2023revisiting} and report results for GCN and GIN from \cite{nguyen2023revisiting}. For GAT and unitary GCN, we tune the dropout and learning rate as detailed in \Cref{app:experimental_details}. All results are accumulated over $100$ random trials. In addition, in \Cref{fig:mutag_acc_depth}, we show that the unitary GCN maintains its performance over large network depths in contrast to other message passing layers. The deteriation in performance of conventional message passing layers is a likely a consequence of over-smoothing which we show does not occur in unitary graph convolution \cite{cai2020note,rusch2023survey}.

\begin{table}[]
    \centering
    \setlength{\tabcolsep}{3pt}
\begin{small}
{
\textsc{
\begin{tabular}{l@{\hspace{4pt}}l@{\hspace{2pt}}c@{\hspace{4pt}}c@{\hspace{7pt}}c@{\hspace{4pt}}c@{\hspace{3pt}}}
\toprule
&Method     & \multicolumn{1}{c}{ENZYMES} & \multicolumn{1}{c}{IMDB} & \multicolumn{1}{c}{MUTAG} & \multicolumn{1}{c}{PROTEINS} \\ 
&           & Test AP $\uparrow$ & Test AP $\uparrow$ & Test AP $\uparrow$ & Test AP $\uparrow$ \\
\midrule
\multirow{3}*{\rotatebox{90}{MP}}
& GCN & 25.5 $\pm$ 1.2 & 49.3 $\pm$ 0.8 & 68.8 $\pm$ 2.4 & 70.6 $\pm$ 0.7 \\
& GIN & 31.3 $\pm$ 1.1 & \textbf{69.0 $\pm$ 1.0} & 75.7 $\pm$ 3.6 & 69.7 $\pm$ 0.8 \\
& GAT & 26.8 $\pm$ 1.2 & 47.0 $\pm$ 1.1 & 68.5 $\pm$ 2.8 & 72.6 $\pm$ 0.8 \\
\midrule
\multirow{2}*{\rotatebox{90}{Ours}}
& Unitary GCN & 41.5 $\pm$ 1.6 & 61.2 $\pm$ 1.4 & 86.8 $\pm$ 3.3 & 75.1 $\pm$ 1.3  \\
& Wide Unitary GCN & \textbf{42.1 $\pm$ 1.5} & 62.4 $\pm$ 1.3 & \textbf{87.1 $\pm$ 3.5} & \textbf{75.6 $\pm$ 1.0} \\
\bottomrule
\end{tabular}
}
}
\end{small}

    \caption{Comparison of Unitary GCN with Lie UniConv layers (\Cref{def:lie_orthogonal_conv}) with other GNN architectures on the TU datasets. Each complex number is counted as two parameters for our architectures, except for wide Unitary GCN which counts a complex numbers as one parameter so that the width of hidden layers roughly matches that of vanilla GCN.}
    \label{tab:tu_dataset}
\end{table}

\begin{figure}
    \centering
    \includegraphics[width=0.5\columnwidth]{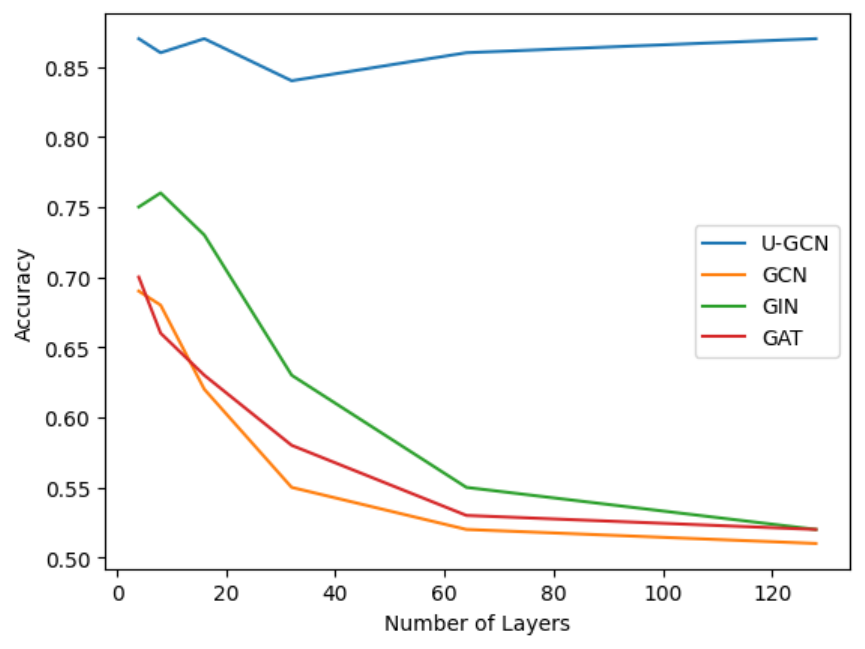}
    \caption{Test accuracies on Mutag for GCN, GIN, GAT, and a GCN with UniConv layers with increasing number of layers. Except for the unitary network, all other message passing architectures collapse to trivial accuracy levels as the number of layers increases.}
    \label{fig:mutag_acc_depth}
\end{figure}

\subsection{Dihedral group distance} \label{app:dihedral}
The dihedral group $D_n$ is a group of order $2n$ describing the symmetries (rotations and reflections) of a regular polygon. Its elements are generated by a rotation generator $r$ and reflection generator $s$:
\begin{equation}
    \mathrm{D}_n = \left\langle r,s \mid r^n = s^2 = (sr)^2 = 1 \right\rangle.
\end{equation}

In this task, analaogous to the graph distance task in \Cref{sec:experiments}, the goal is to learn the distance between two group elements $g,g'$ in the dihedral group $D_n$. Formally, we aim to learn a target function $f:\mathbb{R}^{|D_n|} \to \mathbb{Z}$ mapping inputs of dimension $2n$ (the vector space of the regular representation) to positive integers. The input space $\mathbb{R}^{|D_n|}$ is indexed by group elements and convolution is performed on the regular representation of the group. Each data pair $(\vx_i, y_i)_i$ is drawn as follows:
\begin{itemize}
    \item Draw two group elements $g,g' \sim \operatorname{Unif}(D_n)$ from the uniform distribution over the group $D_n$ without replacement.
    \item Set $\vx_i = \ve_{g} + \ve_{g'}$ where $\ve_{g} \in \mathbb{R}^{|D_n|}$ is a unit vector whose entries are all zero except for the entry corresponding to operation $g$ set to one.
    \item Set $y_i = d_{D_n}(g,g') \in \mathbb{Z}$ where $d_{D_n}(g,g')$ indicates the number of applications of the generators that need to be applied to go from $g$ to $g'$, i.e. 
    \begin{equation}
        d_{D_n}(g,g') \coloneqq \min_{a_1, \dots, a_{2n} \in \{s,r,r^{-1}\} } \left\{ T: a_1 a_2 \cdots a_T g = g' \right\} \;.
    \end{equation}
\end{itemize}

\begin{figure}
    \centering
    \includegraphics[width=0.6\columnwidth]{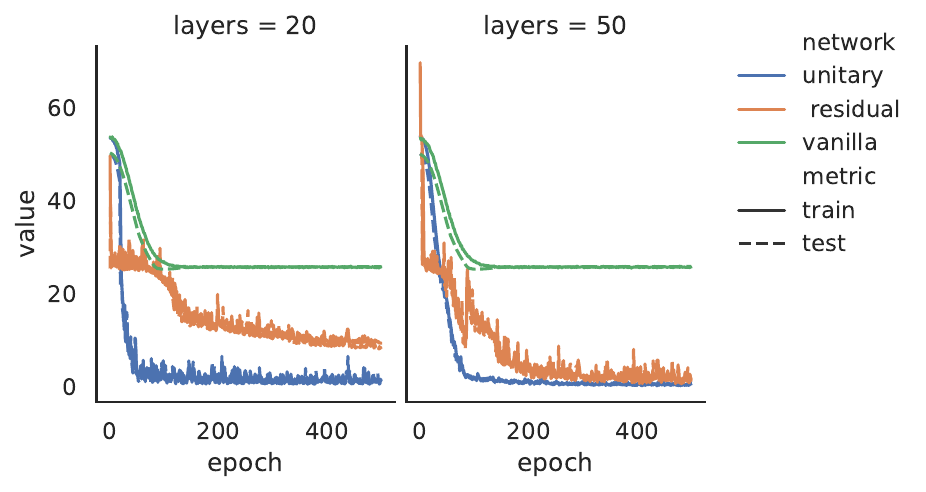}
    \caption{Training and test MAE of the distance learning task for the dihedral group. Listed as column headers are the number of convolution layers in each network. Residual and Unitary convolutional networks are both able to learn the task under default hyperparameters for the optimizer.  }
    \label{fig:dihedral_results}
\end{figure}

\Cref{fig:dihedral_results} plots the performance of three different networks in this task. The vanilla network performs standard convolution in each layer where the group convolution is parameterized over elements of the generator. The residual network is the same architecture but includes skip connections after convolutions. The unitary network applies the exponential map to a skew-Hermitian version of the group convolution as outlined earlier. The unitary network is able to learn this task in fewer layers and with added stability. No hyperparameter tuning was performed in these experiments. Adam optimizer parameters were set to the default setting. All networks have a global average pooling after the last convolution layer followed by a 2 layer MLP to output a scalar. The number of channels is set to 32 for each convolution layer.

\subsection{Orthogonal Convolution}

To compare orthogonal (real-valued) and unitary convolutional layers, we repeat our experiments on Peptides-func and Peptides-struct from the main text. We report the results in table \ref{tab:ortho_conv}. Edge features are not included in this ablation as no edge feature aggregator was included in the architecture. GCN with OrthoConv layers achieves performance levels similar to UniConv without using complex numbers in its weights. 

\begin{table}[]
    \centering
    \setlength{\tabcolsep}{3pt}
\begin{small}
{
\textsc{
\begin{tabular}{l@{\hspace{4pt}}c@{\hspace{4pt}}c@{\hspace{7pt}}}
\toprule
Method     & \multicolumn{1}{c}{PEPTIDES-FUNC} & \multicolumn{1}{c}{PEPTIDES-STRUCT} \\ 
& Test AP $\uparrow$ & Test MAE $\downarrow$ \\
\midrule
Unitary GCN & 0.7043 $\pm$ 0.0061 & 0.2445 $\pm$ 0.0009 \\
Wide U. GCN & 0.7094 $\pm$ 0.0020 & 0.2429 $\pm$ 0.0007 \\
\midrule
Narrow O. GCN & 0.7011 $\pm$ 0.0096 & 0.2461 $\pm$ 0.0021 \\
Orthogonal GCN & 0.7037 $\pm$ 0.0053 & 0.2433 $\pm$ 0.0018 \\
\bottomrule
\end{tabular}
}
}
\end{small}
    \caption{Comparison of GCN with Lie UniConv layers (\Cref{def:unitary_graph_conv}) with real-valued (hence orthogonal) or complex-valued (hence unitary) on Peptides-func and Peptides-struct. The Narrow Orthogonal GCN as the same width and depth as the Unitary GCN where we counted complex weights twice. Narrow Orthogonal GCN therefore has only about 285K parameters. Orthogonal GCN has the same width and depth as Wide Unitary GCN, and therefore about 490K parameters.} 
    \label{tab:ortho_conv}
\end{table}

\section{Experimental details}
\label{app:experimental_details}

For LRGB datasets, we evaluate our GNNs using the GraphGym platform \cite{you2020design}. In \Cref{tab:peptide_and_zinc}, we list reported results where available from various architectures \cite{kipf2016semi,bresson2017residual,xu2018powerful,tonshoff2023did,pmlr-v202-he23a,pmlr-v202-gutteridge23a,rampavsek2022recipe,ma2023graph,shirzad2023exphormer,tonshoff2023walking}. Reported results are taken from existing papers as of May 1, 2024. Experiments were run on Pytorch \cite{paszke2017automatic} and specifically the Pytorch Geometric package for training GNNs \cite{Fey/Lenssen/2019}. 

Our implementation of unitary graph convolution does not take into account edge features. Thus, for datasets with edge features, at times, we included a single initial convolution layer using the GINE \cite{xu2018powerful} or Gated GCN \cite{bresson2017residual} architecture which aggregates the edge features into the node features. For instances where this is done, we indicate the type of layer as an ``edge aggregator" hyperparameter in the tables below.

All our experiments were trained on a single GPU (we used either Nvidia Tesla V100 or Nvidia RTX A6000 GPUs). 
Training time varied depending on the dataset. Peptides datasets trained in no more than 15 seconds per epoch. PascalVOC-SP took about a minute per epoch to train and COCO-SP took about 15 minutes per epoch to train. The PascalVOC-SP and two Peptides datasets are no more than 1 GB in size. The COCO dataset was much larger, about 12 GB in size. The TU and Heterophilous Node Classification datasets are all less than 1GB in size and take only a few second to train per epoch.


\begin{remark}[Parameter counting] \label{rem:param_count}
    LRGB datasets require neural networks to be within a budget of 500k parameters. For fair comparison, complex numbers are counted as two parameters each. Furthermore, to handle constraints for parameterized matrices in the Lie Algebra, we treat the number of parameters as the dimension of the Lie Algebra (i.e. the number of parameters to fully parameterize the Lie algebra).
\end{remark}

\paragraph{Toy model: graph distance} All networks consist of the stated number of convolution or transformer layers with feature dimension $128$ followed by a global average pooling over nodes. Pooled features are then passed through a single hidden layer MLP with $128$ dimension width. For networks with $5$, $10$ and $20$ layers respectively, we set the learning rate to $0.0007$, $0.0003$ and $0.0001$ respectively. Networks are trained with the mean average error (L1) loss. Activation functions for all networks are set to GELU apart from the unitary networks where we choose the norm-preserving activation GroupSort \cite{trockman2021orthogonalizing,anil2019sorting} (see \Cref{app:architectural_considerations}). To ensure unitary layers are also truly norm-preserving, we do not include a trainable bias in these layers.

\begin{table}[t!]
\caption{
    Hyperparameters on Peptides datasets. Number of parameters are counted using the default Pytorch method which undercounts complex numbers or the methodology as stated in \Cref{rem:param_count}. Edge aggregator indicates a single layer of the specified type which is used to incorporate edge feature data into node features.
}
\label{tab:peptides_hyperparams}
\begin{center}
\resizebox{1\textwidth}{!}{
\begin{tabular}{lcccc}
\toprule
      & \multicolumn{2}{c}{Unitary GCN }  & \multicolumn{2}{c}{Lie Unitary GCN }\\
      & \textsc{PEPTIDES-FUNC} &\textsc{PEPTIDES-STRUCT}& \textsc{PEPTIDES-FUNC} & \textsc{PEPTIDES-STRUCT}\\ 
\midrule
lr              & 0.001 & 0.001 & 0.001 & 0.001 \\
dropout         & 0.1 & 0.2 & 0.1 & 0.2 \\
\# Conv. Layers        & 10 & 6 & 14 & 6 \\
hidden dim.     & 135 & 200 & 155 & 200 \\
PE/SE           & RWSE & LapPE & RWSE & LapPE \\
batch size      & 200 & 200 & 200 & 200 \\
\# epochs        & 2000 & 500 & 4000 & 500 \\
edge aggregator & GINE & GINE & GINE & GINE \\
\# Param. (see \Cref{rem:param_count})        & 468K & 470K & 466K & 470K \\
\# Param.  (default Pytorch)      & 282K & 305K & 464K & 305K \\
\bottomrule
\end{tabular}
}
\end{center}
\vskip -0.1in
\end{table}

\paragraph{Peptides}
For the Peptides experiments, the training procedure follows that of \cite{tonshoff2023did}. Unless otherwise stated, we use the Adam optimizer with learning rate set to $0.001$ and a cosine learning rate scheduler \cite{kingma2014adam}. Most hyperparameters were taken from \cite{tonshoff2021walking}. For our purposes, we tuned dropout rates in the set $\{0.1,0.15,0.2\}$ and tested networks of layers in the set $\{6,8,10,12,14\}$ selecting the hidden width to fit within the parameter budget. For larger levels of dropout, we found that we needed to train the network for more epochs to achieve convergence. Final hyperparameters are listed in \Cref{tab:coco_hyperparams}.

\begin{table}[t!]
\caption{
    Hyperparameters on Coco and PascalVOC datasets. Number of parameters are counted using the default Pytorch method which undercounts complex numbers or the methodology as stated in \Cref{rem:param_count}. Edge aggregator indicates a single layer of the specified type which is used to incorporate edge feature data into node features.} 
\label{tab:coco_hyperparams}
\begin{center}
\begin{tabular}{lcccc}
\toprule
      & \multicolumn{2}{c}{Unitary GCN }  & \multicolumn{2}{c}{Lie Unitary GCN }\\
      & \textsc{COCO-SP} &\textsc{PASCALVOC-SP}& \textsc{COCO-SP} & \textsc{PASCALVOC-SP}\\ 
\midrule
lr              & 0.001 & 0.001 & 0.001 & 0.001 \\
dropout         & 0.1 & 0.1 & 0.1 & 0.1 \\
\# Conv. Layers        & 12 & 15 & 12 & 15 \\
hidden dim.     & 120 & 145 & 155 & 145 \\
PE/SE           & None & RWSE & None & RWSE \\
batch size      & 50 & 50 & 50 & 50 \\
\# epochs        & 750 & 1000 & 500 & 1000 \\
edge aggregator & Gated & Gated & Gated & Gated \\
\# Param. (see \Cref{rem:param_count})        & 466K & 453K & 478K & 453K \\
\# Param.  (default Pytorch)      & 290K & 305K & 481K & 305K \\
\bottomrule
\end{tabular}
\end{center}
\vskip -0.1in
\end{table}

\paragraph{COCO-SP and PascalVOC-SP}
For the COCO-SP and PascalVOC-SP experiments, our training procedure again follows that of \cite{tonshoff2023did}. We use the Adam optimizer with learning rate set to $0.001$ and a cosine learning rate scheduler \cite{kingma2014adam}, unless otherwise stated. For our purposes, we tuned dropout rates in the set $\{0.1,0.15,0.2\}$ and tested networks of layers in the set $\{6,8,10,12,14\}$ selecting the hidden width to fit within the parameter budget. For larger levels of dropout, we found that we needed to train the network for more epochs to achieve convergence. Final hyperparameters are listed in \Cref{tab:coco_hyperparams}.

\begin{footnotesize}
\begin{table}[t!]
\caption{
    Hyperparameters on the TU datasets.  
}
\label{tab:tu_hyperparams}
\begin{center}
\resizebox{1\textwidth}{!}{
\begin{tabular}{lcccccccc}
\toprule
      & \multicolumn{4}{c}{Unitary GCN }  & \multicolumn{4}{c}{Lie Unitary GCN }\\
      & \textsc{ENZYMES} &\textsc{IMDB}& \textsc{MUTAG} & \textsc{PROTEINS} & \textsc{ENZYMES} &\textsc{IMDB}& \textsc{MUTAG} & \textsc{PROTEINS} \\ 
\midrule
lr              & 0.001 & 0.001 & 0.001 & 0.001 & 0.001 & 0.001 & 0.001 & 0.001 \\
dropout         & 0.5 & 0.5 & 0.5 & 0.5 & 0.5 & 0.5 & 0.5 & 0.5 \\
\# Conv. Layers & 6 & 6 & 6 & 6 & 6 & 6 & 6 & 6 \\
hidden dim.     & 128 & 128 & 128 & 128 & 256 & 256 & 256 & 256 \\
PE/SE           & None & None & None & RWSE & None & None & None & RWSE \\
Edge Features & No & No & Yes & No & No & No & Yes & No \\
batch size      & 50 & 50 & 50 & 50 & 50 & 50 & 50 & 50 \\
\# epochs       & 300 & 300 & 300 & 300 & 300 & 300 & 300 & 300 \\
\bottomrule
\end{tabular}
}
\end{center}
\vskip -0.1in
\end{table}

\end{footnotesize}

\paragraph{TU Datasets}
Our experiments are designed as follows: For a given GNN model, we train on a part of the dataset and evaluate performance on a withheld test set using a train/val/test split of 50/25/25 percent. For all models considered, we record the test set accuracy of the settings with the highest validation accuracy. As there is a certain stochasticity involved, especially
when training GNNs, we accumulate experimental results across 100 random trials. We report the mean test accuracy, along with the $95\%$ confidence interval. 
For the TU Datasets, our training procedure and hyperparameter tuning procedure follows that described in \cite{nguyen2023revisiting}. Namely, we fix the depth and width of the network as in \cite{nguyen2023revisiting} and tune the learning rate and dropout parameters. Final hyperparameters are listed in \Cref{tab:tu_hyperparams}. To ensure fairness and comparability, we conducted the same hyperparameter searches with the baseline models we compare against, but found no improvements over the numbers reported in previous papers. We therefore report their (higher) numbers instead of ours for these models.

\begin{table}[t!]
\caption{
    Hyperparameters on the Heterophilous Graph datasets. 
}
\label{tab:platonov_hyperparams}
\begin{center}
\resizebox{1\textwidth}{!}{
\begin{tabular}{lcccccccccc}
\toprule
      & \multicolumn{5}{c}{Unitary GCN}  & \multicolumn{5}{c}{Lie Unitary GCN }\\
      & \textsc{ROM} &\textsc{AMA}& \textsc{MINE} & \textsc{TOL} & \textsc{QUE} & \textsc{ROM} &\textsc{AMA}& \textsc{MINE} & \textsc{TOL} & \textsc{QUE}\\ 
\midrule
lr              & 0.001 & 0.001 & 0.0001 & 0.0001 & 0.0001 & 0.001 & 0.001 & 0.0001 & 0.0001 & 0.0001 \\
dropout         & 0.2 & 0.2 & 0.2 & 0.2 & 0.2 & 0.5 & 0.2 & 0.2 & 0.2 & 0.5 \\
\# Conv. Layers        & 8 & 8 & 8 & 8 & 8 & 6 & 4 & 8 & 8 & 4 \\
hidden dim.     & 512 & 512 & 512 & 512 & 512 & 512 & 512 & 512 & 512 & 512 \\
PE/SE           & LAPE & LAPE & LAPE & LAPE & LAPE & LAPE & LAPE & LAPE & LAPE & LAPE \\
batch size      & 50 & 50 & 50 & 50 & 50 & 50 & 50 & 50 & 50 & 50 \\
\# epochs        & 2000 & 2000 & 2000 & 2000 & 2000 & 2000 & 2000 & 2000 & 2000 & 2000 \\
\bottomrule
\end{tabular}
}
\end{center}
\vskip -0.1in
\end{table}

\paragraph{Heterophilous Graph Datasets}
For a given GNN model, we train on a part of the dataset and evaluate performance on a withheld test set using a train/val/test split of 50/25/25 percent, in accordance with \cite{platonov2023critical}. Note that we do not separate ego- and neighbor-embeddings, and hence also do not report accuracies for models from the original paper that used this pre-processing (e.g. GAT-sep and GT-sep). Our training procedure generally follows that described in the original paper \cite{platonov2023critical}. We use the AdamW optimizer, stop training after no improvement in $200$ steps, and including residual connections in the intermediate network layers. For the unitary GNNs, we tuned values for the dropout in $\{0.2,0.5\}$, number of layers in $\{4,6,8\}$ and learning rate in $\{0.001, 0.0001\}$. To output a single number, we use a single convolution layer (SAGE convolution) to map from the higher dimensional space to a single number for each node. \cite{platonov2023critical} differs in that they have an MLP in between layers; we opt for a more simple approach. Final hyperparameters are listed in \Cref{tab:platonov_hyperparams}. To ensure fairness and comparability, we conducted the same hyperparameter searches with SAGE and GCN, but generally found no significant differences with respect to the numbers reported in previous papers. We therefore report their usually higher numbers instead of ours for these models. 


\subsection{Licenses} \label{app:licenses}
We list below the licenses of code and datasets that we use in our experiments.

\begin{table}[h]
\begin{tabular}{lll}
Model/Dataset & License  & Notes  \\ \hline
LRGB \cite{dwivedi2022long} & Custom  & See \href{https://github.com/vijaydwivedi75/lrgb}{here} for license \\
TUDataset \cite{Morris+2020} & Open &   Open sourced \href{https://chrsmrrs.github.io/datasets/docs/home/}{here}  \\
Heterophily Data \cite{platonov2023critical} & N/A & Data is open source; no license stated in \href{https://github.com/yandex-research/heterophilous-graphs?tab=readme-ov-file}{repository} \\ 
Pytorch Geometric \cite{Fey/Lenssen/2019} & MIT & See \href{https://github.com/pyg-team/pytorch_geometric/blob/master/LICENSE}{here} for license \\
GraphGym \cite{you2020design} & MIT & See \href{https://github.com/snap-stanford/GraphGym?tab=License-1-ov-file}{here} for license \\
GraphGPS \cite{rampavsek2022recipe} & MIT & See \href{https://github.com/rampasek/GraphGPS?tab=MIT-1-ov-file}{here} for license
\\ 
Pytorch \cite{paszke2017automatic} & 3-clause BSD & See \href{https://pytorch.org/FBGEMM/general/License.html}{here} for license \\
\hline
\end{tabular}
\end{table}

 \end{document}